\documentclass[runningheads]{llncs}
\usepackage[T1]{fontenc}
\usepackage{graphicx}
\usepackage{amssymb}
\usepackage{graphicx}
\usepackage{amsfonts}
\usepackage{amsmath,bm}
\usepackage{bbm}
\usepackage{multirow}
\usepackage{makecell}
\usepackage{color}
\usepackage{algorithm}
\usepackage{algpseudocode}
\usepackage[misc]{ifsym}
\begin{document}
\title{New Lower Bound and Upper Bounds on the Regret for Online Sparse Linear Regression
\thanks{The authors are listed in alphabetical order.
This work was supported by the Fundamental Research Funds for the Central Universities.
This work was accepted by IJTCS-FAW 2026.}}
\titlerunning{New Lower Bound and Upper Bounds on the Regret for OSLR}

\author{Xiaofeng Cao\inst{2} \and Junfan Li\inst{1}(\Letter) \and 
Langzhang Liang\inst{3} \and
Mingwei Xu\inst{4} \and
Xiao Zhang\inst{5}}

\authorrunning{Cao et al.}

\institute{School of Computing and Artificial Intelligence, Shanghai University of Finance and Economics, Shanghai, China\\
\email{lijunfan@mail.sufe.edu.cn} \and
School of Computer Science and Technology, Tongji University, Shanghai, China\\
\email{xiaofeng.cao.uts@gmail.com} \and
AI$^3$ Institute, Fudan University and Shanghai Innovation Institute
\email{lazylzliang@gmail.com} \and
School of Artificial Intelligence, Jilin University, Changchun, China \\
\email{mingweixu20@outlook.com}\and
Gaoling School of Artificial Intelligence, Renmin University of China, Beijing, China\\
\email{zhangx89@ruc.edu.cn}}

\maketitle              
\begin{abstract}
  We study online sparse linear regression (OSLR)
  where any algorithm is restricted to accessing only $b$ out of $d$ attributes per instance for prediction
  and $b_0\geq 0$ additional attributes after prediction,
  which was proved to be NP-hard.
  Previous work focused on designing computationally efficient algorithms under regularity assumptions,
  but did not characterize its information theoretic complexity.
  In this work,
  we give the first lower bound on the minimax regret of OSLR and
  design algorithms with better upper bounds without regularity assumptions.
  We characterize how minimax regret scales with problem-dependent parameters,
  capturing the information theoretic complexity of OSLR.
\keywords{Online learning  \and Sparse linear regression \and Regret analysis \and Lower bound \and Upper bound.}
\end{abstract}

\section{Introduction}

    For most online prediction tasks,
    learning algorithms are assumed to observe all of the attributes of an instance ${\bf x}_t\in\mathbb{R}^d$
    at each round $t$.
    However,
    the assumption is hard to be satisfied in some real-world scenarios due to various constraints,
    such as computational constraint, human labors and privacy constraint
    \cite{Cesa-Bianchi2010Efficient,Hazan2012Linear,Jain2012Differentially,Zolghadr2013Online,Murata2018Sample}.
    For instance,
    in the task of medical diagnosis of a disease \cite{Cesa-Bianchi2010Efficient},
    ${\bf x}_t$ contains the results of a large number of medical tests.
    However, many patients can only pay the cost for several medical tests.
    Thus ${\bf x}_t$ is sparse.
    Another example is personalized recommendation \cite{Jain2012Differentially}.
    Due to privacy constraint,
    search engines are not allowed to use sensitive attributes of users,
    such as gender, age, job and so on,
    ${\bf x}_t$ is also sparse.
    Algorithms typically make predictions using limited attributes.

    Previous work formulated online learning problems with limited access to attributes
    as online sparse linear regression (OSLR)
    \cite{Kale2014Open,Foster2016Online}.
    At each round $t$,
    an adversary gives an instance ${\bf x}_t$ to a learner.
    The learner uses $b$, $b<d$, attributes of ${\bf x}_t$ at most,
    and outputs a prediction $\hat{y}_t$.
    Then the adversary gives the true output $y_t$.
    The learner suffers a square loss $(\hat{y}_t-y_t)^2$.
    The learner aims to minimize the cumulative losses.
    We usually compare the cumulative losses of the learner
    with any $b$-sparse competitor with bounded norm,
    and define the regret as follows,
    \begin{equation}
    \label{eq:AAAI25:definition_regret}
    \begin{split}
        \forall {\bf w}\in\mathcal{W}_b,\quad
        \mathrm{Reg}({\bf w})=&\sum^T_{t=1}\left(\hat{y}_t-y_t\right)^2
        -\sum^T_{t=1}\left({\bf w}^\top{\bf x}_t-y_t\right)^2,\\
        \mathcal{W}_b=&\left\{{\bf w}\in\mathbb{R}^d:\Vert{\bf w}\Vert_0\leq b,\Vert{\bf w}\Vert_2\leq U\right\},
    \end{split}
    \end{equation}
    in which $U>0$ is a constant.
    There is also a relaxation of OSLR,
    in which algorithms are allowed to observe additional $b_0=O(b\ln{d})$ attributes
    after each prediction.
    For simplicity,
    we call the two problems $(d,b,b_0)$-OSLR.

    Previous work mainly focused on developing algorithms that
    run in time $O(\mathrm{poly}(T,d,b,b_0))$ per-iteration,
    and enjoy a regret of $O(\mathrm{poly}(d,b,b_0)\cdot o(T))$.
    It has been proved that $(d,b,b_0)$-OSLR is NP-hard \cite{Foster2016Online}.
    Specifically,
    there is no algorithm running in time $O(\mathrm{poly}(T,d,b,b_0))$ per-iteration
    and enjoying a regret of $O(\mathrm{poly}(d,b,b_0)\cdot T^{1-\delta})$
    for any constant $\delta>0$ unless $\mathbf{NP}\subseteq \mathbf{BPP}$.
    To address this computational barrier,
    it is necessary to restrict this problem with regularity assumptions.
    If the data matrix $\mathbf{X}=[\mathbf{x}_1,\ldots,\mathbf{x}_T]$
    satisfies the restricted isometry property (RIP) \cite{Candes2005Decoding},
    then there is an algorithm for $(d,b,b_0)$-OSLR with $b_0\geq 2$,
    which runs in time $O(\mathrm{poly}(d))$ per-iteration and enjoys a high-probability regret of
    $O(\mathrm{poly}(d,b,b_0)\ln^2(Td))$.
    If the expected covariance matrix $\mathbb{E}[\mathbf{x}\mathbf{x}^\top]$ is non-singular,
    a stronger one than RIP,
    then there are algorithms for $(d,b,b_0)$-OSLR with $b_0\leq 1$
    and $b_0\geq 2$ that run in time $O(d)$ per-iteration
    and enjoy an expected regret of $O(\mathrm{poly}(d,k)\sqrt{T})$ \cite{Ito2017Efficient}.
    Suppose $\mathbf{X}$ satisfies the compatibility condition \cite{Sara2007The},
    a weaker one than RIP,
    then there are algorithms for $(d,b,b_0)$-OSLR with $b_0\leq 1$
    and $b_0\geq 3$ that run in time $O(\mathrm{poly}(d))$
    and separately enjoy a high-probability regret of $O(\mathrm{poly}(d,b)\sqrt{T})$
    and $O(\mathrm{poly}(d,b,b_0)\ln^2(Td))$ \cite{Li2025A}.

    However, the information theoretic complexity of $(d,b,b_0)$-OSLR remains unexplored.
    Specifically,
    without regularity assumptions,
    the minimax regret remains unknown.
    Prior work only gave two upper bounds on the minimax regret.
    In the case of $b_0\geq 2$,
    there is an algorithm that enjoys an expected regret of $O(\frac{d^2}{b^2_0}\sqrt{T\ln{N}})$
    where $N=\binom{d}{b}$ \cite{Foster2016Online}.
    In the case of $b_0\leq 1$,
    there is an algorithm that enjoys an expected regret of
    $O\left(N^{\frac{1-q}{3-2q}}T^{\frac{2-q}{3-2q}}\right),q\in(0,1)$ \cite{Ito2018Online}.
    It seems that we can not obtain a $O(\sqrt{T})$ regret in the case of $b_0\leq 1$.
    As the lower bound is unknown,
    it is unknown that whether the two algorithms are optimal.

\subsection{Main Results}

    In this work,
    we aim to characterize the information theoretic complexity of $(d,b,b_0)$-OSLR.
    We first prove a $\Omega(\sqrt{T})$ lower bound on the minimax regret
    for all $b_0\geq 0$ and $b_0<d-b$.
    Then we separately propose a new algorithm
    for the cases $b_0\geq 2$ and $b_0\leq 1$.
    All achieve a $O(\sqrt{T})$ upper bound,
    yet differ in problem-dependent constants.
    Our main results are summarized in Table \ref{tab:IJTCS-FAW26:main_results_summarization}.

    \begin{itemize}
      \item We give the first $\Omega\left(\sqrt{dT/(b(b+b_0))}\right)$ lower bound
      on the minimax regret of $(d,b,b_0)$-OSLR.
            This lower bound shows how problem-dependent parameters decide
            the information theoretic complexity of $(d,b,b_0)$-OSLR.
      \item In the case of $b_0\leq 1$,
            we propose a new algorithm that enjoys a $O(\sqrt{NT}\ln{T})$ regret bound,
            improving previous $O\left(N^{\frac{1-q}{3-2q}}T^{\frac{2-q}{3-2q}}\right),q\in(0,1)$
            regret bound \cite{Ito2018Online} in terms of the dependence on $T$.
      \item In the case of $b_0\geq 2$,
            we propose a new algorithm that enjoys a $O(\frac{d}{b_0}\sqrt{T\ln{N}})$ regret bound,
            improving previous $O(\frac{d^2}{b^2_0}\sqrt{T\ln{N}})$
            regret bound \cite{Foster2016Online}.
    \end{itemize}

    \begin{table}[!t]
        \caption{Regret Lower Bound and Upper Bounds for $(d,b,b_0)$-OSLR. $N=\binom{d}{b}$}
        \centering
        \label{tab:IJTCS-FAW26:main_results_summarization}
        \begin{tabular}{l|l|l}
        \hline
        $(d,b,b_0)$-OSLR &  Upper Bound & Lower Bound\\
        \hline
        \multirow{2}{*}{$b_0\geq 2$}    &  {\color{blue}$O\left(\sqrt{\frac{(d-b)(d-b-1)}{b_0(b_0-1)}}\sqrt{T\ln{N}}\right)$ (Ours)} & \multirow{5}{*}{{\color{blue}$\Omega\left(\frac{1}{\sqrt{b}}\sqrt{\frac{dT}{b+b_0}}\right)$ (Ours)}}\\
        &  $O\left(\frac{d^2}{b^2_0}\sqrt{T\ln{N}}\right)$ \cite{Foster2016Online} & \\
        \cline{1-2}
        \multirow{2}{*}{$b_0\in\{0,1\}$}          &  {\color{blue}$O(\sqrt{NT}\ln{T})$ (Ours)} & \\
        &  $O\left(N^{\frac{1-q}{3-2q}}T^{\frac{2-q}{3-2q}}\right),q\in(0,1)$ \cite{Ito2018Online} & \\
        \hline
        \end{tabular}
    \end{table}

    Our work for the first time characterizes the information theoretic complexity of $(d,b,b_0)$-OSLR.
    Although there are large gaps between the lower bound and the upper bounds
    in terms of problem-dependent constants,
    including $d,b,b_0$,
    our work reveals that the $O(\sqrt{T})$ factor is optimal,
    and the number of additional observed attributes $b_0$
    may decide the complexity of the problem.

\subsection{Technical Challenges}

    At each round $t$,
    suppose $S_t\subseteq [d]$ denotes the set of indices of attributes that an algorithm chooses to observe,
    and $\mathbf{w}_t\in\mathbb{R}^b$ denotes its model parameter.
    The regret of the algorithm depends on the observations $\mathbf{x}_t$,
    the true output $y_t$, $\mathbf{w}_t$ and $S_t$,
    making it challenging to analyze the lower bound.
    Our main techniques aim to reduce OSLR to a more simple problem.

    To decouple $\mathbf{x}_t$ and $y_t$,
    let $x_{t,i}=\sigma_tz_{t,i}$ for all $i\in[d]$ and $y_t=\gamma \sigma_t$
    where both $\sigma_t$ and $z_{t,i}$ are Bernoulli random variables,
    and $\gamma$ is a constant depending on $U$.
    Given $(\mathbf{x}_t,y_t)$,
    for any $\mathbf{w}\in\mathbb{R}^b$ and $S\subseteq [d]$,
    the square loss only depends on $\mathbf{z}_t$.
     \begin{align*}
        \left(\left\langle\mathbf{w},\mathbf{x}_t[S]\right\rangle-y_t\right)^2
        =&\mathbf{w}^\top\mathbf{z}_t[S](\mathbf{z}_t[S])^\top\mathbf{w}
        -2\gamma\cdot\left\langle\mathbf{w},\mathbf{z}_t[S]\right\rangle+\gamma^2,
    \end{align*}
    Thus the regret only depends on $\mathbf{z}_t$, $\mathbf{w}_t$ and $S_t$.

    To decouple $S_t$ and $\mathbf{w}_t$,
    we construct a sequence of virtual model parameters denoted by
    $\mathbf{w}_{S_t,\ast}\in\mathbb{R}^b$ such that
    $\mathbf{w}_{S_t,\ast}$ is the optimal parameter if $\mathbf{x}_t[S_t]$ are observed.
    Thus the loss of $\mathbf{w}_{S_t,\ast}$ is no more than that of $\mathbf{w}_t$,
    i.e.,
    $$
        \left(\langle \mathbf{w}_{S_t,\ast},\mathbf{x}_t[S_t]\rangle-y_t\right)^2
        \leq \left(\langle \mathbf{w}_t,\mathbf{x}_t[S_t]\rangle-y_t\right)^2.
    $$
    In this way,
    we only need to analyze the regret lower bound of $\{\mathbf{w}_{S_t,\ast}\}^T_{t=1}$,
    which depends on the index sets $\{S_t\}^T_{t=1}$
    and $\{\mathbf{z}_t\}^T_{t=1}$,
    reducing OSLR to the problem of identifying optimal attributes of $\mathbf{z}_t$ given restricted observations.


\section{Problem Setting}

    Let $\left\{\left(\mathbf{x}_t,y_t\right)\right\}_{t\in[T]}$ be a sequence of examples
    where $\mathbf{x}_t\in\mathcal{X}=\{{\bf x}\in\mathbb{R}^d\vert\Vert {\bf x}\Vert_2\leq X\}$ is an instance,
    $\vert y_t\vert \leq Y$ is the output,
    and $[T] = \{1,\ldots,T\}$.
    For any vector $\mathbf{a}\in\mathbb{R}^d$,
    denote by $a_i$ its $i$-th coordinate.
    For any matrix $\mathbf{A}\in\mathbb{R}^{m\times n}$,
    denote by $A[i,j]$ the element on the $i$-th row and $j$-th column.
    For any $S=\{s_1,s_2,...,s_b\}\subset[d]$,
    we define ${\bf a}[S]:=(a_{s_1},a_{s_2},...,a_{s_b})$
    and
    ${\bf A}[S,S]\in\mathbb{R}^{\vert S\vert\times \vert S\vert}$
    the sub-matrix formed by the rows and columns indexed by $S$.
    Let $N=\binom{d}{b}$ and $\Delta_N$ be the $N-1$ dimensional simplex.
    For a convex set $\Omega$
    and a convex function $\psi$ defined on $\Omega$,
    the Bregman divergence denoted by $\mathcal{B}_{\psi}(\cdot,\cdot)$,
    is defined as follows
    \begin{align*}
        \forall {\bf u},{\bf v}\in \Omega,~\mathcal{B}_{\psi}(\mathbf{u},\mathbf{v})
        =\psi(\mathbf{u})-\psi(\mathbf{v})-\langle\nabla \psi(\mathbf{v}),\mathbf{u}-\mathbf{v}\rangle.
    \end{align*}
    Let $\mathrm{Supp}(\mathbf{w})=\{i\in[d]:w_i\neq 0\}$ be the support set of a vector $\mathbf{w}$.


\section{Lower Bound on the Minimax Regret}


    \begin{theorem}
    \label{thm:IJTSC-FAW26:lower_bound_regret:OSLR}
        Let $U>0$, $\beta\geq 12$, $0<\alpha<1-\frac{3}{\beta}$ and $\gamma=\min\{1,U/\sqrt{3}\}$.
        Suppose $T\geq \frac{5}{2}d$ and $b\leq \alpha d$.
        For any $d>b_0+b\geq b\geq 1$,
        the regret of any algorithm for $(d,b,b_0)$-OSLR satisfies
        \begin{align*}
            \sup_{({\bf x}_t,y_t),t\in[T]}\mathbb{E}\left[\max_{{\bf w}\in\mathcal{W}_b}
            \mathrm{Reg}({\bf w})\right]\geq
            \left(1-\frac{3}{\beta}-\alpha\right)
            \frac{3\sqrt{10}\gamma^2}{5\beta}\cdot\sqrt{\frac{d}{b(b+b_0)}T},
        \end{align*}
        where the expectation is taken over the internal randomness of algorithm.
    \end{theorem}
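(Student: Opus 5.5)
We prove the bound by a randomized adversary and Yao's principle. It suffices to exhibit a distribution over sequences $\{(\mathbf{x}_t,y_t)\}_{t\in[T]}$ under which every \emph{deterministic} learner has large expected regret against the best fixed $b$-sparse competitor. Following the decoupling described in the introduction, we draw i.i.d.\ Rademacher signs $\sigma_t$ and set $x_{t,i}=\sigma_t z_{t,i}$ and $y_t=\gamma\sigma_t$. Here $\mathbf{z}_t\in\{0,1\}^d$ is a sparse Bernoulli vector whose law depends on a hidden index $j^\ast$ drawn uniformly from $[d]$.

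\textbf{Construction.} Each coordinate $i\neq j^\ast$ is active with probability $p$. The coordinate $j^\ast$ is active with probability $p(1+\epsilon)$, and when active it is more strongly correlated with $y_t$. A concrete choice is $\Pr[z_{t,i}=1]=p$ with $p\asymp 1/d$, which keeps $\Vert\mathbf{x}_t\Vert_2$ bounded. The competitor $\mathbf{w}^\ast=\gamma\mathbf{e}_{j^\ast}$ lies in $\mathcal{W}_1\subseteq\mathcal{W}_b$ because $\gamma\le U/\sqrt3$.

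For any index set $S$, the square loss depends only on $\mathbf{z}_t[S]$, as shown in the introduction. We therefore replace the learner's $\mathbf{w}_t$ by the virtual optimal parameter $\mathbf{w}_{S_t,\ast}$, namely the minimizer of the conditional expected loss given the observed $\mathbf{z}_t[S_t]$. This replacement can only decrease the learner's loss. Its expected excess loss over $\mathbf{w}^\ast$ is then at least $c\gamma^2\epsilon p$ whenever $j^\ast\notin S_t$, and it is nonnegative otherwise.

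\textbf{Information step.} The expected regret is therefore at least $c\gamma^2\epsilon p\,\mathbb{E}[\#\{t: j^\ast\notin S_t\}]$, so we must show that a learner cannot include $j^\ast$ in $S_t$ often.

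Per round, the learner observes at most $b+b_0$ coordinates. Only observations of coordinate $j^\ast$ carry information, and each such observation contributes KL divergence $O(p\epsilon^2)$ between the law $P_{j}$ (hidden index $j$) and the reference law $P_0$ (no special coordinate). Let $n_j$ be the number of rounds in which coordinate $j$ is observed. Pinsker's inequality gives
\[
\mathbb{E}_j[n_j]\le \mathbb{E}_0[n_j]+T\sqrt{\tfrac12\,\mathbb{E}_0[n_j]\,O(p\epsilon^2)}.
\]
Summing over $j$, using $\sum_j\mathbb{E}_0[n_j]\le (b+b_0)T$ and Cauchy--Schwarz, the average over $j^\ast$ of the number of rounds with $j^\ast\in S_t$ is at most
\[
\frac{bT}{d}+T\epsilon\sqrt{\frac{p(b+b_0)T}{d}}.
\]
Here only the $b$ prediction coordinates count toward $S_t$. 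The condition $b\le\alpha d$ makes $bT/d\le\alpha T$; this produces the factor $1-\alpha$. The factor $3/\beta$ absorbs the Pinsker slack and the boundary events.

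Combining the two steps, the regret is at least
\[
c\gamma^2\epsilon p\,T\Bigl(1-\alpha-\epsilon\sqrt{\tfrac{p(b+b_0)T}{d}}\Bigr).
\]
We optimize over $\epsilon$, taking $\epsilon\asymp\frac1\beta\sqrt{\frac{d}{p(b+b_0)T}}$. The factor $1/\sqrt b$ should come from scaling: the learner can hedge across $b$ prediction coordinates, which dilutes the per-coordinate excess loss by $1/b$. We would enforce this by setting $p\asymp 1/b$, or by splitting the gap so that it is visible only to the full $\mathbf{w}^\ast$. The constraint $T\ge\frac52 d$ guarantees that $\epsilon\le1$, so all probabilities remain valid; the constant $\frac{3\sqrt{10}}{5}$ comes from evaluating at this boundary.

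\textbf{Main obstacle.} The hardest step is the virtual-parameter reduction. We must show rigorously that, for every observed pattern $\mathbf{z}_t[S_t]$ with $j^\ast\notin S_t$, the conditional minimum square loss exceeds the loss of $\mathbf{w}^\ast$ by $\Omega(\gamma^2\epsilon p)$ in expectation. We must also track how the $b$ prediction coordinates and the $b_0$ post-prediction coordinates enter differently: the former affect both the loss and the information, while the latter affect only the information. Our first attempt is to compute $\mathbf{w}_{S,\ast}$ in closed form, using the fact that $\mathbb{E}[\mathbf{z}[S]\mathbf{z}[S]^\top]$ is diagonal plus rank one under independent Bernoulli coordinates, and then compare the two losses directly.
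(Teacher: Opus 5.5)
Your skeleton matches the paper's: Yao's principle, the $\sigma_t$ decoupling, a history-measurable virtual parameter computed in closed form from the diagonal-plus-rank-one covariance, Pinsker against a reference law, and Cauchy--Schwarz with $\sum_j\mathbb{E}_0[n_j]\le (b+b_0)T$. The gap is the hard instance: sparse $\{0,1\}$ features cannot deliver the theorem.

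With $z_{t,i}\in\{0,1\}$ and $x_{t,i}=\sigma_t z_{t,i}$, every active coordinate equals $y_t/\gamma$ exactly. So ``$j^\ast$ is more strongly correlated when active'' has no meaning. Moreover $\mu_i/\mathrm{Var}(z_{t,i})=1/(1-p)$, so the optimal weights are of order $\gamma$, not small. Three consequences follow.
\begin{itemize}
\item[(i)] For $b\ge 2$ your per-round excess over $\gamma\mathbf{e}_{j^\ast}$ is negative, not $c\gamma^2\epsilon p$. The best linear predictor on any $b$ coordinates avoiding $j^\ast$ has expected loss $\gamma^2(1-p)/(1+(b-1)p)\approx\gamma^2(1-bp)$. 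This is below the competitor's loss $\gamma^2(1-p(1+\epsilon))$.
\item[(ii)] Switching to the best $b$-sparse competitor containing $j^\ast$ does not help. Its weights are all about $\gamma/(1-p+bp)$, so its norm is of order $\gamma\sqrt{b}$. It therefore leaves $\mathcal{W}_b$ for all but small $b$ when $\gamma=U/\sqrt{3}$.
\item[(iii)] Even granting a gap, your own optimization gives $\frac{\gamma^2}{\beta}\sqrt{pdT/(b+b_0)}$. For $p\asymp 1/d$ this is only $\sqrt{T/(b+b_0)}$. It also needs $T\gtrsim d^2/(\beta^2(b+b_0))$ to keep $\epsilon\le 1$, which $T\ge\frac52 d$ does not imply. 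Recovering $\sqrt{d/b}$ forces $p\asymp 1/b$, where (ii) is unavoidable unless $\gamma$ is shrunk by $\sqrt{b}$, and that costs a factor $b$.
\end{itemize}
The ``hedging'' explanation of $1/\sqrt{b}$ is therefore not an argument.

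The missing idea is dense $\pm1$ features with a small common bias. Set $\mathbb{P}[z_{t,i}=1]=\frac{1+\rho}{2}$ with $\rho=\frac{1}{2\sqrt{b}}$, and $\frac{1+\rho+\epsilon}{2}$ on the planted coordinate $h$. Then $b\rho^2/(1-\rho^2)\le\frac13$, so the $b$ coordinates jointly carry only constant signal. Each optimal weight is $O(\gamma/\sqrt{b})$, and $\Vert\mathbf{w}_\ast\Vert_2\le\sqrt{3}\gamma\le U$.

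The virtual loss $\gamma^2/(1+\sum_{i\in S}\mu_i^2/(1-\mu_i^2))$ shows that missing $h$ costs at least $\frac{3}{5\sqrt{b}}\gamma^2\epsilon$ per round. The derivative of $\rho^2/(1-\rho^2)$ is about $2\rho=1/\sqrt{b}$, and this is the actual source of the $1/\sqrt{b}$. Meanwhile each observation of $h$ contributes KL at most $\frac95\epsilon^2$, with no small factor $p$.

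Taking $\epsilon=\frac{\sqrt{10}}{\beta}\sqrt{d/((b+b_0)T)}$ makes the Pinsker term exactly $3T/\beta$, which gives the factor $1-\frac3\beta-\alpha$. The conditions $T\ge\frac52 d$ and $\beta\ge 12$ are what guarantee $\epsilon\le\rho/3$. The constant $\frac{3\sqrt{10}}{5\beta}$ is simply $\frac35\cdot\frac{\sqrt{10}}{\beta}$, not a boundary evaluation.

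Finally, drop the conditioning in your ``main obstacle''. Given the realized $\mathbf{z}_t[S_t]$, the loss $(\langle\mathbf{w}[S_t],\mathbf{z}_t[S_t]\rangle-\gamma)^2$ is deterministic, with minimum $0$ whenever $\mathbf{z}_t[S_t]\neq\mathbf{0}$, so no lower bound can come from it. The correct virtual parameter, as in the paper, is the unconditional expected-loss minimizer $\gamma\mathbf{G}[S_t]^{-1}\bm{\mu}[S_t]$. It lower-bounds the learner because $\mathbf{w}_t$ and $S_t$ are fixed by the history before $\mathbf{z}_t$ is drawn.
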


    Theorem \ref{thm:IJTSC-FAW26:lower_bound_regret:OSLR} shows
    a $\Omega(\sqrt{T})$ lower bound for any $b_0\geq 0$.
    In the case of $b_0\geq 2$,
    there is an algorithm \cite{Foster2016Online}
    that enjoys a $O(\frac{d^2}{b^2_0}\sqrt{T\ln{N}})$ regret bound,
    demonstrating that the $\Theta(\sqrt{T})$ factor is optimal.
    However, there is a large gap in terms of problem-dependent constants,
    including $d$, $b$, $b_0$.
    In the case of $b_0\leq 1$,
    there is an algorithm enjoys a
    $O\left(N^{\frac{1-q}{3-2q}}T^{\frac{2-q}{3-2q}}\right),q\in(0,1)$ regret bound \cite{Ito2018Online}.
    There are large gaps in terms of problem-dependent constants and $T$.
    It remains unknown what the optimal order in $T$ is.
    In the subsequent sections,
    we separately design a new algorithm for the cases $b_0\geq 2$ and $b_0\leq 1$,
    demonstrating that the the $\Theta(\sqrt{T})$ factor is optimal for the two cases
    and improving the problem-dependent constant factor in the case of $b_0\geq 2$.

\begin{proof}

\noindent\textbf{Step 1: Constructing Hard Instances}

    For simplicity, let $b'=b+b_0$.
    Suppose $d>b'\geq b\geq 1$.
    We sample an attribute from $[d]$ uniformly, denoted by $h$,
    then construct $({\bf x}^h_t,y_t),t=1,...,T$ as follows.
    \begin{equation}
    \label{eq:AAAI25_supp:constructing_hard_examples}
    \left\{
    \begin{split}
        x^h_{t,i}=&\sigma_t\cdot z^h_{t,i},\quad \forall i=1,\ldots,d,\\
        y_t=&\gamma\cdot\sigma_t,\\
        \mathbb{P}[\sigma_t=1]=&\mathbb{P}[\sigma_t=-1]=\frac{1}{2},\\
        \mathbb{P}[z^h_{t,i}=1]=&\frac{1+\rho}{2},\qquad\quad \mathbb{P}[z^h_{t,i}=-1]=\frac{1-\rho}{2},\quad \forall i\neq h,\\
        \mathbb{P}[z^h_{t,h}=1]=&\frac{1+\rho+\epsilon}{2},~\quad \mathbb{P}[z^h_{t,h}=-1]=\frac{1-\rho-\epsilon}{2},
    \end{split}
    \right.
    \end{equation}
    where
    $$
        \rho=\frac{1}{2\sqrt{b}},\quad \gamma=\min\{1,U/\sqrt{3}\},\quad 0<\epsilon\leq\frac{\rho}{3}.
    $$
    Multiplying by the random variable $\sigma_t$ on $\gamma$ aims to deteriorate
    the deterministic learner who always outputs the prediction $\gamma$.
    Let $S\subseteq [d]$ and $\vert S\vert=b$ be a support set.
    Let $S^{\neg}=[d]\setminus S$.
    For each $i\in[d]$,
    let $\mu_i:=\mathbb{E}\left[z^h_{t,i}\right]$ and
    $\mathbf{G}$ be the expected covariance matrix of $\mathbf{z}^h_t$ given by
    $$
        \mathbf{G}=\mathbb{E}\left[\mathbf{z}^h_t\left(\mathbf{z}^h_t\right)^\top\right]
        =\mathbf{D}+{\bm \mu}{\bm \mu}^\top,\quad
        \mathbf{D}=\mathrm{diag}(\mathbf{1}-{\bm \mu}\odot{\bm \mu}),
    $$
    where $\odot$ is element-wise multiplication.

    Given a support set $S\subseteq [d]$ and a model parameter $\mathbf{w}$,
    at any round $t\in[T]$,
    by (\ref{eq:AAAI25_supp:constructing_hard_examples}),
    there is an important observation
    \begin{align*}
        \left(\left\langle\mathbf{w}[S],\mathbf{x}^h_t[S]\right\rangle-y_t\right)^2
        =&\underbrace{\mathbf{w}[S]^\top\mathbf{z}^h_t[S](\mathbf{z}^h_t[S])^\top\mathbf{w}[S]
        -2\gamma\cdot\left\langle\mathbf{w}[S],\mathbf{z}^h_t[S]\right\rangle+\gamma^2}_
        {:=\bar{\ell}(\mathbf{w},S,\mathbf{z}^h_t)}.
    \end{align*}
    Thus the loss function only depends on $S$ and $\mathbf{z}^h_t$.
    In other words,
    given (\ref{eq:AAAI25_supp:constructing_hard_examples}),
    OSLR is equivalent to a new partial information online learning problem,
    which we call it ``online sparse linear learning (OSLL).
    \begin{definition}[OSLL]
        At each round $t$,
        the environment first assigns an observation $\mathbf{z}^h_t$ following
        (\ref{eq:AAAI25_supp:constructing_hard_examples}).
        A learner maintains a model parameter $\mathbf{w}_t$,
        selects a subset $S_t\subseteq[d]$,
        and observes $\mathbf{z}^h_t[S_t]$.
        Then learner suffers a loss $\bar{\ell}(\mathbf{w}_t,S_t,\mathbf{z}^h_t)$
        and observes additional $b_0$ elements of $\mathbf{z}^h_t$.
    \end{definition}

    Given (\ref{eq:AAAI25_supp:constructing_hard_examples}),
    from the perspective of regret minimization,
    OSLR is equivalent to OSLL.
    For any parameters $\{\mathbf{w}_t\}^T_{t=1}$ and support sets $\{S_t\}^T_{t=1}$,
    it must be
    $$
        \sum^T_{t=1}
        [\ell(\mathbf{w}_t,S_t,\mathbf{x}^h_t,y_t)-\ell(\mathbf{w},S,\mathbf{x}^h_t,y_t)]
        =\sum^T_{t=1}
        \left[\bar{\ell}(\mathbf{w}_t,S_t,\mathbf{z}_t)-\bar{\ell}(\mathbf{w},S,\mathbf{z}_t)\right].
    $$
    The regret on OSLR is equal to that on OSLL.
    Therefore,
    we only need to analyze the lower bound on the regret for OSLL.

    In OSLL,
    at each round $t$,
    any algorithm observes $b$ elements of $\mathbf{z}^h_t$, and computes the loss.
    Taking expectation w.r.t. $\mathbf{z}^h_t$ gives
    \begin{align*}
        \mathbb{E}\left[\bar{\ell}(\mathbf{w},S,\mathbf{z}^h_t)\right]
        =\mathbf{w}[S]^\top \mathbf{G}[S]\mathbf{w}[S]
        -2\gamma\cdot\left\langle\mathbf{w}[S],{\bm \mu}[S]\right\rangle+\gamma^2.
    \end{align*}
    Let $\mathbf{w}_{S,\ast}[S]$ be the optimal parameter tailored to a support set $S$,
    which is
    $$
        \mathbf{w}_{S,\ast}[S]=\gamma\cdot \mathbf{G}[S]^{-1}{\bm \mu}[S],\quad
        \mathbf{w}_{S,\ast}[S^{\neg}]=\mathbf{0}.
    $$
    The expected loss of $\mathbf{w}_{\ast}[S]$ is
    \begin{align*}
        \mathbb{E}\left[\bar{\ell}(\mathbf{w}_{S,\ast},S,\mathbf{z}^h_t)\right]
        =\gamma^2\cdot\frac{1}{1+{\bm\mu}[S]^\top \mathbf{D}[S]^{-1}{\bm\mu}[S]}
        =\gamma^2\cdot\frac{1}{1+\sum_{i\in S}\frac{\mu^2_i}{1-\mu^2_i}},
    \end{align*}
    which only depends on $S$ and the expectation of $z^h_{t,i}$, $i\in S$.
    To minimize the expected square loss,
    the optimal support set $S_{\ast}$ must satisfy
    $$
        S_{\ast}=\{h\}\cup\{O\subset [d]\setminus\{h\}:\vert O\vert=b-1\}.
    $$
    Let $\mathbf{w}_{\ast}$ be the optimal model parameter which satisfies
    \begin{align*}
        \mathbf{w}_{\ast}[S_{\ast}]=&\gamma\cdot \mathbf{G}[S_{\ast}]^{-1}{\bm \mu}[S_{\ast}],\quad
        \mathbf{w}_{\ast}[S^{\neg}_{\ast}]=\mathbf{0},\\
        \mathbb{E}\left[\bar{\ell}(\mathbf{w}_{\ast},S_{\ast},\mathbf{z}^h_t)\right]=&
        \gamma^2\cdot\frac{1}{1+\frac{(\rho+\epsilon)^2}{1-(\rho+\epsilon)^2}+(b-1)\cdot\frac{\rho^2}{1-\rho^2}},
    \end{align*}
    in which we define $\mathbf{w}_{\ast}[S^{\neg}_{\ast}]=\mathbf{0}$
    for minimizing its norm.
    It can be verified that
    $$
        \Vert \mathbf{w}_{\ast}\Vert_2=
        \Vert \mathbf{w}_{\ast}[S_{\ast}]\Vert_2\leq \sqrt{3}\gamma\leq U.
    $$
    Thus $\mathbf{w}_{\ast}\in\mathcal{W}_b$.

\noindent\textbf{Step 2: Decoupling the Influence of $\mathbf{w}_t$ and $S_t$ on the Regret}

    The regret depends on the scale of $w_{t,i}$ for all $i\in S_t$,
    and the support set $S_t$ itself,
    making it hard to analysis the lower bound.
    To decouple $\mathbf{w}_t$ and $S_t$,
    we construct a virtual sequence of parameters $\{\mathbf{w}_{S_t,\ast}\}^T_{t=1}$ where
    $$
        \mathbf{w}_{S_t,\ast}[S_t]=\gamma\cdot \mathbf{G}[S_t]^{-1}{\bm \mu}[S_t],\quad
        \mathbf{w}_{S_t,\ast}[S^{\neg}_t]=\mathbf{0}.
    $$
    As $\mathbf{w}_{S_t,\ast}$ is the optimal parameter tailed to $S_t$,
    we have
    \begin{align*}
        \sum^T_{t=1}\mathbb{E}\left[\bar{\ell}(\mathbf{w}_t,S_t,\mathbf{z}^h_t)
        -\bar{\ell}(\mathbf{w}_{\ast},S_{\ast},\mathbf{z}^h_t)\right]
        \geq\sum^T_{t=1}\mathbb{E}\left[\bar{\ell}(\mathbf{w}_{S_t,\ast},S_t,\mathbf{z}^h_t)-
        \bar{\ell}(\mathbf{w}_{\ast},S_{\ast},\mathbf{z}^h_t)\right].
    \end{align*}

    Thus we only need to analyze the regret of $\{\mathbf{w}_{S_t,\ast}\}^T_{t=1}$
    that only depends on $S_t$.
    Specifically,
    if $h\neq S_t$, then
    the expected per-round regret is given by
    \begin{align}
        &\mathbb{E}\left[\bar{\ell}(\mathbf{w}_{S_t,\ast},S_t,\mathbf{z}^h_t)\right]
        -\mathbb{E}\left[\bar{\ell}(\mathbf{w}_{\ast},S_{\ast},\mathbf{z}^h_t)\right]\nonumber\\
        =&\gamma^2\cdot\frac{1}{1+b\cdot\frac{\rho^2}{1-\rho^2}}-
        \gamma^2\cdot\frac{1}{1+\frac{(\rho+\epsilon)^2}{1-(\rho+\epsilon)^2}+(b-1)\cdot\frac{\rho^2}{1-\rho^2}}
        \geq\frac{3}{5\sqrt{b}}\gamma^2\cdot\epsilon,
        \label{eq:ICLR2027:lower_bound:per-round_regret}
    \end{align}
    where the last inequality comes from
    $\rho^2\leq\frac{1}{4b}$ and $\frac{\rho^2}{1-\rho^2}\leq \frac{1}{3b}$.
    Summing over $t=1,2,\ldots,T$,
    the regret can be lower bounded by
    \begin{align}
        &\mathbb{E}_{\mathbf{z}^h_t,t\in[T]}\left[\sum^T_{t=1}\left[\bar{\ell}(\mathbf{w}_{S_t,\ast},S_t,\mathbf{z}^h_t)\right]
        -\left[\bar{\ell}(\mathbf{w}_{\ast},S_{\ast},\mathbf{z}^h_t)\right]\right]\nonumber\\
        =&\sum^T_{t=1}\mathbb{E}_{\mathbf{z}^h_{\tau},\tau\in[t-1]}\cdot
        \mathbb{E}_{\mathbf{z}^h_t\vert\mathbf{z}^h_{\tau},\tau\in[t-1]}\left[\bar{\ell}(\mathbf{w}_{S_t,\ast},S_t,\mathbf{z}^h_t)
        -\bar{\ell}(\mathbf{w}_{\ast},S_{\ast},\mathbf{z}^h_t)\right]\nonumber\\
        \geq&\sum^T_{t=1}\mathbb{E}_{\mathbf{z}^h_{\tau},\tau\in[t-1]}\cdot
        \mathbb{E}_{\mathbf{z}^h_t\vert\mathbf{z}^h_{\tau},\tau\in[t-1]}\left[\bar{\ell}(\mathbf{w}_{S_t,\ast},S_t,\mathbf{z}^h_t)
        -\bar{\ell}(\mathbf{w}_{\ast},S_{\ast},\mathbf{z}^h_t)\right]\cdot\mathbb{I}_{h\notin S_t}\nonumber\\
        \mathop{\geq}^{(\ref{eq:ICLR2027:lower_bound:per-round_regret})}&
        \sum^T_{t=1}\mathbb{E}_{\mathbf{z}^h_{\tau},\tau\in[t-1]}\cdot
        \mathbb{E}_{\mathbf{z}^h_t\vert\mathbf{z}^h_{\tau},\tau\in[t-1]}\left[\frac{3\gamma^2\cdot\epsilon}{5\sqrt{b}}\cdot
        \mathbb{I}_{h\notin S_t}\right]\nonumber\\
        =&\frac{3\gamma^2\epsilon}{5\sqrt{b}}\cdot \left[T-\mathbb{E}_h\left[\sum^T_{t=1}\mathbb{I}_{h\in S_t}\right]\right].
        \label{eq:ICLR2027:regret_to_number_of_selection_optimal_attributes}
    \end{align}
    For simplicity,
    let $N_h=\sum^T_{t=1}\mathbb{I}_{h\in S_t}$.
    Next we give an upper bound on $\mathbb{E}_h[N_h]$.

\noindent\textbf{Step 3: Analyzing $\mathbb{E}_h[N_h]$}

    At the end of each round $t$,
    the algorithm can observe additional $b_0$ attributes.
    Thus total number of observed attributes is $b'$.
    Denote by $O_t$ the index of observed attributes satisfying $\vert O_t\vert\leq b'$.
    Let $\mathbf{r}_t=\mathbf{z}^h_t[O_t]\in\{-1,1\}^{b'}$.
    It is obvious that
    at each round $t$,
    $\mathbb{I}_{h\in S_t}$ is a function of the historical observations $\{\mathbf{r}_\tau\}^{t-1}_{\tau=1}$.
    Let $\mathbf{P}_h$ be the distribution of $\mathbf{r}=(\mathbf{r}_1,\mathbf{r}_2,\ldots,\mathbf{r}_T)\in\{-1,1\}^{d'\times T}$
    under that the $h$-attribution follows the distribution in
    (\ref{eq:AAAI25_supp:constructing_hard_examples}).
    We also define a reference distribution $\mathbf{P}_u$
    that satisfies for any $i=1,\ldots,d$,
    $\mathbb{P}[z^h_{t,i}=1]=\frac{1+\rho}{2}$ and $\mathbb{P}[z^h_{t,i}=-1]=\frac{1-\rho}{2}$.

    The main idea of the proof is to show that
    the difference $\frac{1}{d}\sum^T_{t=1}\mathbb{E}_h[N_h]
    -\frac{1}{d}\sum^T_{t=1}\mathbb{E}_u[N_h]$ is small enough
    where $\mathbb{E}_h[\cdot]$ is the expectation under $\mathbf{P}_h$,
    and $\mathbb{E}_u[\cdot]$ is the expectation under $\mathbf{P}_u$.
    Given $\frac{1}{d}\sum^T_{t=1}\mathbb{E}_u[N_h]=\frac{bT}{d}$,
    we can derive a tight upper bound on $\frac{1}{d}\sum^T_{t=1}\mathbb{E}_h[N_h]$,
    yielding a lower bound on the regret.

        We rewrite $N_h$ as $N_h(\mathbf{r})$ where
        $\mathbf{r}\in\{-1,1\}^{b'\times T}$.
        \begin{align*}
            \mathbb{E}_h[N_h(\mathbf{r})]-&\mathbb{E}_u[N_h(\mathbf{r})]
            =\sum_{\mathbf{r}}N_h(\mathbf{r})\mathbf{P}_h[\mathbf{r}]-\sum_{\mathbf{r}}N_h(\mathbf{r})\mathbf{P}_u[\mathbf{r}]\\
            \leq&\sum_{\mathbf{r}:\mathbf{P}_h[\mathbf{r}]\geq\mathbf{P}_u[\mathbf{r}]}
            N_h(\mathbf{r})\mathbf{P}_h[\mathbf{r}]-\sum_{\mathbf{r}}N_h(\mathbf{r})\mathbf{P}_u[\mathbf{r}]\\
            \leq& T\sqrt{\Vert \mathbf{P}_u-\mathbf{P}_h\Vert^2_{\mathrm{TV}}}
            \leq\frac{T}{\sqrt{2}}\sqrt{\mathrm{KL}(\mathbf{P}_u,\mathbf{P}_h)}\quad
            \mathrm{by~Pinsker's~inequality},
        \end{align*}
        where $\Vert \mathbf{P}_u-\mathbf{P}_h\Vert_{\mathrm{TV}}$ is the total variation distance.
        Summing over $h=1,\ldots,d$,
        \begin{align}
            \sum^d_{h=1}\frac{\mathbb{E}_h[N_h]}{d}
            \leq&\sum^d_{h=1}
            \frac{\mathbb{E}_u[N_h]+T\sqrt{\frac{1}{2}\mathrm{KL}(\mathbf{P}_u,\mathbf{P}_h)}}{d}\nonumber\\
            \leq&\frac{Tb}{d}+\frac{T}{\sqrt{2d}}\sqrt{\sum^d_{h=1}\mathrm{KL}(\mathbf{P}_u,\mathbf{P}_h)}.
            \label{eq:ICLR2027:upper_bound_Nh_by_KL_divergence}
        \end{align}
        Next we give an upper bound on the KL divergence
        $\mathrm{KL}(\mathbf{P}_u,\mathbf{P}_h)$ by chain rule.
        Denote by $\mathbf{r}_{1:t}=(\mathbf{r}_1,\ldots,\mathbf{r}_t)$.
        Then
        \begin{align*}
            &\mathrm{KL}(\mathbf{P}_u,\mathbf{P}_h)=\sum^T_{t=1}\sum_{\mathbf{r}_{1:t-1}}\mathbf{P}_u(\mathbf{r}_{1:t-1})\cdot
            \left[\mathrm{KL}(\mathbf{P}_u(\mathbf{r}_t\vert \mathbf{r}_{1:t-1}),\mathbf{P}_h(\mathbf{r}_t
            \vert \mathbf{r}_{1:t-1}))\right]\\
            =&\sum^T_{t=1}\sum_{\mathbf{r}_{1:t-1}}\mathbf{P}_u(\mathbf{r}_{1:t-1})\cdot
            \mathbb{I}_{h\in O_t\vert \mathbf{r}_{1:t-1}}\cdot
            \left[\sum_{\mathbf{r}_t\in\{-1,1\}^b}\mathbf{P}_u(\mathbf{r}_t\vert \mathbf{r}_{1:t-1})\cdot
            \ln\frac{\mathbf{P}_u(\mathbf{r}_t\vert \mathbf{r}_{1:t-1})}{\mathbf{P}_h(\mathbf{r}_t\vert \mathbf{r}_{1:t-1})}\right]\\
            =&\sum^T_{t=1}\sum_{\mathbf{r}_{1:t-1}}\mathbf{P}_u(\mathbf{r}_{1:t-1})\cdot
            \mathbb{I}_{h\in O_t\vert \mathbf{r}_{1:t-1}}\cdot\\
            &\sum_{\mathbf{r}_t\in\{-1,1\}^b}\mathbf{P}_u(\mathbf{r}_t\vert \mathbf{r}_{1:t-1})\cdot
            \ln\frac{\prod^{b'}_{i=1}\frac{1+\rho(1-2\mathbb{I}_{r_{t,i}=-1})}{2}}
            {\prod_{i\neq h}\frac{1+\rho(1-2\mathbb{I}_{r_{t,i}=-1})}{2}
             \frac{1+(\rho+\epsilon)(1-2\mathbb{I}_{r_{t,h}=-1})}{2}}\\
            =&\sum^T_{t=1}\sum_{\mathbf{r}_{1:t-1}}\mathbf{P}_u(\mathbf{r}_{1:t-1})\cdot
            \mathbb{I}_{h\in O_t\vert \mathbf{r}_{1:t-1}}\cdot
            \left[\frac{1+\rho}{2}\cdot
            \ln\frac{1+\rho}{1+\rho+\epsilon}+
            \frac{1-\rho}{2}\cdot
            \ln\frac{1-\rho}{1-\rho-\epsilon}\right]\\
            \leq&\sum^T_{t=1}\sum_{\mathbf{r}_{1:t-1}}\mathbf{P}_u(\mathbf{r}_{1:t-1})\cdot
            \mathbb{I}_{h\in O_t\vert \mathbf{r}_{1:t-1}}\cdot
            \frac{(\frac{1+\rho}{2}-\frac{1+\rho+\epsilon}{2})^2}{\frac{1+\rho+\epsilon}{2}\frac{1-\rho-\epsilon}{2}}
            \leq\frac{9}{5}\epsilon^2\mathbb{E}_u\left[\sum^T_{t=1}\mathbb{I}_{h\in O_t}\right],
        \end{align*}
        in which we use the fact $\epsilon\leq \frac{1}{3}\rho$ and $\rho\leq \frac{1}{2}$.
        Substituting into (\ref{eq:ICLR2027:upper_bound_Nh_by_KL_divergence}) yields
        \begin{align*}
            \frac{1}{d}\sum^d_{h=1}\mathbb{E}_h[N_h]
            \leq\frac{Tb}{d}+\frac{T}{\sqrt{2}}\sqrt{\frac{9\epsilon^2}{5d}\cdot\sum^d_{h=1}\sum^T_{t=1}\mathbb{E}_u
            \left[\mathbb{I}_{h\in O_t}\right]}
            \leq\frac{Tb}{d}+\frac{3T\epsilon}{\sqrt{10}}\sqrt{\frac{Tb'}{d}}.
        \end{align*}
        Let $\epsilon=\frac{\sqrt{10}}{\beta}\cdot\sqrt{\frac{d}{b'T}}$ where $\beta\geq 12$.
        We have $\epsilon\leq\frac{1}{6\sqrt{d'}}\leq\frac{\rho}{3}$.
        Suppose $T\geq \frac{5}{2}d$ and $b\leq \alpha d$ where $0<\alpha<1-\frac{3}{\beta}$.
        We further obtain
        \begin{align}
            \frac{1}{d}\sum^d_{h=1}
            \frac{3}{5\sqrt{b}}\gamma^2\cdot\epsilon\cdot \left[T-\mathbb{E}_h\left[\sum^T_{t=1}\mathbb{I}_{h\in S_t}\right]\right]
            \geq&\left(1-\frac{3}{\beta}-\alpha\right)\cdot
            \frac{3\sqrt{10}\gamma^2}{5\beta\sqrt{b}}\cdot\sqrt{\frac{dT}{b'}}.
            \label{eq:ICLR2027:bounding_the_number_of_selection_optimal_attributes}
        \end{align}
    The regret of any deterministic algorithm for $(d,b,b_0)$-OSLR satisfies
    \begin{align*}
        &\sup_{({\bf x}_t,y_t),t\in[T]}\max_{{\bf w}\in\mathcal{W}_b}\mathrm{Reg}({\bf w})\\
        \geq&\sup_{h\in[d]}\sup_{({\bf x}^h_t,y_t),t\in[T]}\max_{{\bf w}\in\mathcal{W}_b}
        \left[\sum^T_{t=1}(\langle\mathbf{w}_t[S_t],\mathbf{x}^h_t[S_t]\rangle-y_t)^2
        -\sum^T_{t=1}(\langle\mathbf{w}[S],\mathbf{x}^h_t[S]\rangle-y_t)^2\right]\\
        =&\sup_{h\in[d]}\sup_{{\bf z}^h_t,t\in[T]}\max_{{\bf w}\in\mathcal{W}_b}
        \left[\sum^T_{t=1}\bar{\ell}(\mathbf{w}_t,S_t,\mathbf{z}^h_t)
        -\sum^T_{t=1}\bar{\ell}(\mathbf{w},S,\mathbf{z}^h_t)\right]\\
        \geq&\sup_{h\in[d]}\mathop{\mathbb{E}}_{{\bf z}^h_t,t\in[T]}
        \left[\sum^T_{t=1}\bar{\ell}(\mathbf{w}_t,S_t,\mathbf{z}^h_t)
        -\sum^T_{t=1}\bar{\ell}(\mathbf{w}_{\ast},S_{\ast},\mathbf{z}^h_t)\right]\\
        \geq&\sup_{h\in[d]}\mathop{\mathbb{E}}_{{\bf z}^h_t,t\in[T]}
        \left[\sum^T_{t=1}\bar{\ell}(\mathbf{w}_{S_t,\ast},S_t,\mathbf{z}^h_t)
        -\sum^T_{t=1}\bar{\ell}(\mathbf{w}_{\ast},S_{\ast},\mathbf{z}^h_t)\right]\\
        \geq&\frac{1}{d}\sum^d_{h=1}\mathop{\mathbb{E}}_{{\bf z}^h_t,t\in[T]}
        \left[\sum^T_{t=1}\bar{\ell}(\mathbf{w}_{S_t,\ast},S_t,\mathbf{z}^h_t)
        -\sum^T_{t=1}\bar{\ell}(\mathbf{w}_{\ast},S_{\ast},\mathbf{z}^h_t)\right]\\
        \mathop{\geq}^{(\ref{eq:ICLR2027:regret_to_number_of_selection_optimal_attributes})}&
        \frac{1}{d}\sum^d_{h=1}
        \cdot \frac{3}{5\sqrt{b}}\gamma^2\cdot\epsilon\cdot \left[T-\mathbb{E}_h\left[\sum^T_{t=1}\mathbb{I}_{h\in S_t}\right]\right]\\
        \mathop{\geq}^{(\ref{eq:ICLR2027:bounding_the_number_of_selection_optimal_attributes})}&
        \left(1-\frac{3}{\beta}-\alpha\right)\cdot
            \frac{3\sqrt{10}\gamma^2}{5\beta\sqrt{b}}\cdot\sqrt{\frac{dT}{b'}}.
    \end{align*}
    Finally,
    it holds that the lower bound for deterministic algorithm
    implies an identical lower bound for stochastic algorithm.

\end{proof}

\section{A Better Algorithm for $(d,b,b_0)$-OSLR with $b_0\geq 2$}

    The goal is to learn the optimal parameter ${\bf w}^\ast\in\mathcal{W}_b$.
    An intuitive approach is to construct $N$ subsets of $[d]$,
    each with size $b$, denoted by $S_i,i\in[N]$.
    Each element in $S_i$ indexes the corresponding attributes of ${\bf x}_t$.
    Let ${\bf w}^\ast_i\in\mathcal{W}_b$ be the optimal parameter with $\mathrm{Supp}({\bf w}^\ast_i)=S_i$.
    Thus ${\bf w}^\ast\in\{{\bf w}^\ast_1,\ldots,{\bf w}^\ast_N\}$
    and $\mathrm{Supp}({\bf w}_{\ast})\in\{S_1,\ldots,S_N\}$.
    The main idea of our algorithm is to
    simultaneously learn a probability distribution over the $N$ subsets,
    and ${\bf w}^\ast_i$ for all $i\in[N]$.
    Next we detail the algorithm.

    At round $t$,
    our algorithm maintains $N$ linear models $f_{t,1},f_{t,2},\ldots,f_{t,N}$
    and a probability distribution ${\bf p}_t\in \Delta_N$.
    For each $i\in[N]$,
    $f_{t,i}$ is parameterized by ${\bf w}_{t,i}\in\mathbb{R}^b$
    and uses the attributes indexed by $S_i$ to make a prediction,
    $
        f_{t,i}({\bf x}_t)=\langle {\bf w}_{t,i},{\bf x}_t[S_i]\rangle.
    $
    Our algorithm randomly selects a model $f_{t,I_t}$ following ${\bf p}_t$,
    observes the attributes indexed by $S_{I_t}$,
    and outputs $f_{t,I_t}({\bf x}_t)$.
    Since the attributes indexed by $S_i$, $i\neq I_t$, can not be fully observed,
    both the square loss of $f_{t,i}$ and its gradient denoted by $\nabla_{t,i}$ are unknown.
    It is necessary to estimate the loss and the gradient.

    For simplicity,
    let $c_{t,i}:=\left(f_{t,i}({\bf x}_t)-y_t\right)^2$.
    Unfolding the square loss function,
    $c_{t,i}$ and $\nabla_{t,i}$ can be rewritten as follows
    \begin{align*}
        c_{t,i}=&{\bf w}^\top_{t,i}\left({\bf x}_t{\bf x}^\top_t[S_i,S_i]\right){\bf w}_{t,i}
        -2y_t{\bf w}^\top_{t,i}{\bf x}_t[S_i]+y^2_t,\\
        \nabla_{t,i}=&2\left({\bf x}_t{\bf x}^\top_t[S_i,S_i]\right){\bf w}_{t,i}-2y_t{\bf x}_t[S_i].
    \end{align*}
    To estimate the loss and the gradient,
    it is necessary to estimate the covariance matrix ${\bf x}_t{\bf x}^\top_t$ and ${\bf x}_t$ itself.
    To this end,
    we uniformly sample $b_0$ attributes from
    $\{x_{t,1},\ldots,x_{t,d}\}\setminus \{x_{t,i}:i\in S_{I_t}\}$ without replacement.
    For simplicity,
    let $O_t=\{A_1,\ldots,A_{b_0}\}$ contain the indexes of the $b_0$ attributes,
    and $\mathbb{P}[i\in O_t]$ be probability that the $i$-th attribute is observed
    and $\mathbb{P}[i,j\in O_t]$
    be the probability that both the $i$-th attribute and the $j$-th attribute are observed.
    Let $\hat{\mathbf{x}}_t=(\hat{x}_{t,1},\hat{x}_{t,2},\ldots,\hat{x}_{t,d})^\top\in\mathbb{R}^d$
    and $\mathbf{h}_t\in\mathbb{R}^{d\times d}$ satisfy
    \begin{equation}
    \label{eq:IJTCS-FAW26:estimator_instance}
        \left\{
    \begin{split}
        \forall i\in S_{I_t},\quad \hat{x}_{t,i}=&x_{t,i},\\
        \forall i\in [d]\setminus S_{I_t},\quad\hat{x}_{t,i}=&\frac{x_{t,i}}{\mathbb{P}[i\in O_t]}\cdot\mathbb{I}_{i\in O_t},\\
        \forall i,j\in S_{I_t},\quad h_t[i,j]=&x_{t,i}x_{t,j},\\
        \forall i\in S_{I_t}, j\notin S_{I_t},\quad h_t[i,j]=&h[j,i]=\frac{x_{t,i}x_{t,j}}{\mathbb{P}[j\in O_t]}\cdot\mathbb{I}_{j\in O_t},\\
        \forall i\neq j\notin S_{I_t},\quad h_t[i,j]=&\frac{x_{t,i}x_{t,j}}{\mathbb{P}[i,j\in O_t]}\cdot\mathbb{I}_{i,j\in O_t},\\
        \forall i\notin S_{I_t},\quad h_t[i,i]=&\frac{x^2_{t,i}}{\mathbb{P}[i\in O_t]}\cdot\mathbb{I}_{i\in O_t}.
    \end{split}
    \right.
    \end{equation}
    It is easy to prove that $\hat{\mathbf{x}}_t$ and $\mathbf{h}_t$
    are unbiased estimator of $\mathbf{x}_t$ and $\mathbf{x}_t\mathbf{x}^\top_t$,
    respectively.
    The estimators of $c_{t,i}$ and $\nabla_{t,i}$ are defined by
    \begin{equation}
    \label{eq:AAAI25:loss_gradient_estimator}
    \left\{
    \begin{split}
        \tilde{c}_{t,i}
        =&{\bf w}^\top_{t,i}\mathbf{h}_t[S_i,S_i]{\bf w}_{t,i}
        -2y_t{\bf w}^\top_{t,i}\hat{\bf x}_t[S_i]+y^2_t,\\
        \tilde{\nabla}_{t,i}=&2\mathbf{h}_t[S_i,S_i]{\bf w}_{t,i}-2y_t\hat{\bf x}_t[S_i].
    \end{split}
    \right.
    \end{equation}
    We update the probabilities and parameters by online mirror descent (OMD),
    \begin{equation}
    \label{eq:IJTCS-FAW26:OMD_Negative_entropy}
    \left\{
    \begin{split}
        {\bf p}_{t+1}=&\mathop{\arg\min}_{{\bf p}\in\Delta_N}
        \left\langle\tilde{{\bf c}}_t,{\bf p}\right\rangle+\mathcal{B}_{\psi_t}({\bf p},{\bf p}_t),\qquad\qquad~
        \psi_t({\bf p})=\sum^N_{i=1}\frac{p_i}{\eta}\ln{p_i},\\
        {\bf w}_{t+1,i}=&\mathop{\arg\min}_{{\bf w}\in\mathcal{W}}
        \left\langle\tilde{\nabla}_{t,i},{\bf w}\right\rangle
        +\mathcal{B}_{\psi_{t,i}}({\bf w},{\bf w}_{t,i}),\quad
        \psi_{t,i}({\bf w})=\frac{\Vert {\bf w}\Vert^2_2}{2\lambda_{t,i}},
    \end{split}
    \right.
    \end{equation}
    in which $\psi({\bf p})$ is the negative entropy regularizer \cite{Bubeck2017Online}.
    It is easy to prove that
        $\forall i\notin S_{I_t}$,
        $\mathbb{P}[i\in O_t]=\frac{b_0}{d-b}$,
        and $\forall i\neq j\notin S_{I_t}$,
        $\mathbb{P}[i,j\in O_t]=\frac{b_0(b_0-1)}{(d-b)(d-b-1)}$.

    Finally, we give the solutions of \eqref{eq:IJTCS-FAW26:OMD_Negative_entropy}.
    \begin{equation}
    \label{eq:IJTCS-FAW26:updating_rule}
    \begin{split}
        p_{t+1,i}=&\frac{p_{t,i}\exp(-\eta\tilde{c}_{t,i})}{\sum^N_{j=1}p_{t,j}\exp(-\eta\tilde{c}_{t,j})},\quad
        i=1,2,\ldots,N,\\
        {\bf w}_{t+1,i}
        =&\min\left\{1,\frac{U}{\Vert {\bf w}_{t,i}-\lambda_{t,i}\tilde{\nabla}_{t,i}\Vert_2}\right\}\cdot
        \left({\bf w}_{t,i}-\lambda_{t,i}\tilde{\nabla}_{t,i}\right).
    \end{split}
    \end{equation}

    \begin{algorithm}[!t]
        \caption{Algorithm for $(d,b,b_0)$-OSLR with $b_0\geq 2$}
        \footnotesize
        \label{alg:IJTCS-FAW2026:>b+2}
        \begin{algorithmic}[1]
        \Require {$U, b, b_0, N$}
        \Ensure  $\{\mathbf{w}_{1,i}=\mathbf{0},p_{1,i}=\frac{1}{N}\}^N_{i=1}$\;
        \For{t=1,\ldots,T}
            \State Sample $I_t\in[N]$ following ${\bf p}_t$;
            \State Observe ${\bf x}_t[S_{I_t}]$;
            \State Output the prediction $\hat{y}_t=\langle {\bf w}_{t,I_t},{\bf x}_t[S_{I_t}]\rangle$;
            \State Sample $O_t\subseteq [d]\setminus\{x_{t,i}:i\in S_{I_t}\}$ without replacement;
            \State Construct $\hat{\mathbf{x}}_t$ and $\mathbf{h}_t$ by \eqref{eq:IJTCS-FAW26:estimator_instance};
            \State Compute $\tilde{c}_{t,i}$ and $\tilde{\nabla}_{t,i}$ by \eqref{eq:AAAI25:loss_gradient_estimator}
            \State Compute $\mathbf{p}_{t+1}$ and
            $\{\mathbf{w}_{t+1,i}\}^N_{i=1}$ by \eqref{eq:IJTCS-FAW26:updating_rule}
        \EndFor
        \end{algorithmic}
    \end{algorithm}

    Next we give the regret bound of Algorithm \ref{alg:IJTCS-FAW2026:>b+2}.

    \begin{theorem}
    \label{thm:IJTCS-FAW:regret_b>1}
        Let $q=\frac{b_0(b_0-1)}{(d-b)(d-b-1)}$ and
        \begin{align*}
            \lambda_{t,i}=\frac{U}{\sqrt{1+\sum^t_{\tau=1}\left\Vert\tilde{\nabla}_{\tau,i}\right\Vert^2_2}},\quad
            \eta=\sqrt{\frac{\ln{N}}{3 T\left(\frac{U^4X^4}{q}+4Y^2X^2U^2\cdot\frac{d-b}{b_0}+Y^4\right)}}.
        \end{align*}
        Suppose $T$ satisfies
        $$
            T\geq \frac{\left(\frac{U^2X^2}{q}+2YUX\frac{d-b}{b_0}+Y^2\right)^2\ln{N}}
            {3 \left(\frac{U^4X^4}{q}+4Y^2X^2U^2\cdot\frac{d-b}{b_0}+Y^4\right)}.
        $$
        The expected regret of Algorithm \ref{alg:IJTCS-FAW2026:>b+2} satisfies
        \begin{align*}
            &\forall {\bf w}\in\mathcal{W}_b,\quad\mathbb{E}\left[\mathrm{Reg}({\bf w})\right]\\
            &=O\left(\sqrt{T\left(\frac{U^4X^4}{q}+Y^2X^2U^2\cdot\frac{d-b}{b_0}+Y^4\right)\ln{N}}+
            UX\sqrt{\frac{U^2T}{q}+\frac{d-b}{b_0}Y^2T}\right).
        \end{align*}
    \end{theorem}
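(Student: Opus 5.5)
The regret splits into two parts. Fix a comparator ${\bf w}\in\mathcal{W}_b$ with $\mathrm{Supp}({\bf w})\subseteq S_k$ for some $k\in[N]$, and write ${\bf w}[S_k]$ for its restriction. Since $\mathbb{E}[(\hat y_t-y_t)^2\mid \mathcal{F}_{t-1}]=\langle {\bf p}_t,{\bf c}_t\rangle$, we have
\begin{align*}
\mathbb{E}[\mathrm{Reg}({\bf w})]=\mathbb{E}\Big[\sum_{t=1}^T\big(\langle{\bf p}_t,{\bf c}_t\rangle-c_{t,k}\big)\Big]
+\mathbb{E}\Big[\sum_{t=1}^T\big(c_{t,k}-({\bf w}[S_k]^\top{\bf x}_t[S_k]-y_t)^2\big)\Big].
\end{align*}
The first term is an expert (bandit-with-side-information) regret over the $N$ subsets. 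The second is an online linear regression regret for the single model $k$ on the ball of radius $U$. The key fact used throughout is unbiasedness. Conditionally on $\mathcal{F}_{t-1}$ and $I_t$, the estimators $\hat{\bf x}_t$ and ${\bf h}_t$ in \eqref{eq:IJTCS-FAW26:estimator_instance} satisfy $\mathbb{E}[\hat{\bf x}_t]={\bf x}_t$ and $\mathbb{E}[{\bf h}_t]={\bf x}_t{\bf x}_t^\top$, using $\mathbb{P}[i\in O_t]=\frac{b_0}{d-b}$ and $\mathbb{P}[i,j\in O_t]=q$. Because ${\bf w}_{t,i}$ and ${\bf p}_t$ are $\mathcal{F}_{t-1}$-measurable, it follows that $\mathbb{E}[\tilde c_{t,i}]=c_{t,i}$ and $\mathbb{E}[\tilde\nabla_{t,i}]=\nabla_{t,i}$ for every $i$. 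This holds after averaging over $I_t$ as well, since the identity is valid for each value of $I_t$.

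\textbf{Expert term.} I will apply the standard Hedge/OMD analysis with negative entropy to the losses $\tilde{\bf c}_t$. The losses can be negative, but they are bounded below: $\tilde c_{t,i}\ge -2|y_t|\,|{\bf w}_{t,i}^\top\hat{\bf x}_t[S_i]|$, and $\mathbf{h}_t[S_i,S_i]$ need not be PSD, so the lower bound must also absorb the ${\bf h}_t$ term. I will use the one-sided inequality $e^{-x}\le 1-x+x^2$ for $x\ge -1$. This gives
\[
\sum_t\langle{\bf p}_t-{\bf e}_k,\tilde{\bf c}_t\rangle\le\frac{\ln N}{\eta}+\eta\sum_t\sum_i p_{t,i}\tilde c_{t,i}^2,
\]
provided $\eta|\tilde c_{t,i}|\le1$. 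The magnitude bound is $|\tilde c_{t,i}|\le \frac{U^2X^2}{q}+2YUX\frac{d-b}{b_0}+Y^2$. The required inequality then follows from the stated lower bound on $T$ together with the choice of $\eta$; that lower bound appears to be designed exactly for this purpose. For the second moment, I expand $\tilde c^2\le 3\big(({\bf w}^\top{\bf h}{\bf w})^2+4y^2({\bf w}^\top\hat{\bf x})^2+y^4\big)$ and bound conditional expectations. Since entries of ${\bf h}_t$ off $S_{I_t}$ are importance-weighted by $1/q$ or $\frac{d-b}{b_0}$, and each indicator squared equals itself, I expect $\mathbb{E}[({\bf w}^\top{\bf h}_t[S_i,S_i]{\bf w})^2]\lesssim U^4X^4/q$ and $\mathbb{E}[({\bf w}^\top\hat{\bf x}_t[S_i])^2]\lesssim U^2X^2\frac{d-b}{b_0}$. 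Cauchy–Schwarz over coordinates, with $\Vert{\bf x}_t\Vert_2\le X$, should handle the sums. Plugging in $\eta$ yields the first $O(\cdot)$ term.

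\textbf{Per-model term.} The square loss is convex in ${\bf w}$, so $c_{t,k}-\ell_t({\bf w})\le\langle\nabla_{t,k},{\bf w}_{t,k}-{\bf w}[S_k]\rangle$. By unbiasedness this equals $\mathbb{E}\langle\tilde\nabla_{t,k},{\bf w}_{t,k}-{\bf w}[S_k]\rangle$. Projected OGD with the adaptive step $\lambda_{t,k}$ (AdaGrad-norm) gives
\[
\sum_t\langle\tilde\nabla_{t,k},{\bf w}_{t,k}-{\bf u}\rangle=O\Big(U\sqrt{1+\textstyle\sum_t\Vert\tilde\nabla_{t,k}\Vert_2^2}\Big).
\]
Jensen's inequality moves the expectation inside the square root. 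It remains to show $\mathbb{E}\Vert\tilde\nabla_{t,k}\Vert_2^2\lesssim X^2\big(\frac{U^2X^2}{q}+\frac{d-b}{b_0}Y^2\big)$, again by the importance-weighting variance computation. This yields the second term $UX\sqrt{U^2T/q+\frac{d-b}{b_0}Y^2T}$.

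\textbf{Main obstacle.} The step I expect to be delicate is the second-moment bounds, specifically the ${\bf h}_t$ quadratic form. The cross terms with $i\in S_{I_t}$ and $j\notin S_{I_t}$ carry weight $\frac{d-b}{b_0}$, while pairs outside $S_{I_t}$ carry $1/q$. I must check that the $1/q$ factor dominates after summing, using $\frac{d-b}{b_0}\le 1/q$ and a bound of the form $\sum_{i,j}|x_i||x_j||w_i||w_j|\le U^2X^2$, rather than losing an extra factor of $b$. A second subtlety is the negativity of $\tilde c_{t,i}$, which is exactly why the condition on $T$ is imposed.
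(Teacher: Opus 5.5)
Your proposal is correct and follows the paper's proof essentially step for step. It uses the same split into an expert term (Hedge/OMD with negative entropy and $e^{-x}\le 1-x+x^2$, where the lower bound on $T$ serves exactly to guarantee $\eta\max_{t,i}\vert\tilde c_{t,i}\vert\le 1$) and a per-model term (convexity, conditional unbiasedness, the AdaGrad-norm bound, then Jensen), with the second moments controlled through the importance weights.

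Two small remarks. First, for the $\mathbf{h}_t$ quadratic form the paper avoids your feared factor $b$ simply via $(\mathbf{w}^\top\mathbf{h}_t[S_i,S_i]\mathbf{w})^2\le\Vert\mathbf{w}\Vert_2^4\Vert\mathbf{h}_t[S_i,S_i]\Vert_{\mathrm{F}}^2$ together with $\mathbb{E}[h_t[m,n]^2]\le x_{t,m}^2x_{t,n}^2/q$. Second, your gradient bound $X^2\bigl(\frac{U^2X^2}{q}+\frac{d-b}{b_0}Y^2\bigr)$ is the right one (the paper's lemma drops an $X^2$). So the second term you actually obtain is $UX\sqrt{U^2X^2T/q+\frac{d-b}{b_0}Y^2T}$ rather than the displayed one, but it is still absorbed by the first term of the stated $O(\cdot)$ bound since $N\ge 3$.
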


    Omitting the problem-independent constant factors,
    Algorithm \ref{alg:IJTCS-FAW2026:>b+2} achieves a $O(\sqrt{\frac{(d-b)(d-b-1)}{b_0(b_0-1)}}\sqrt{T\ln{N}})$ regret bound.
    In the case of $b_0\geq2$,
    previous algorithm \cite{Foster2016Online} enjoys a $O(\frac{d^2}{b^2_0}\sqrt{T\ln{N}})$ regret bound.
    Our algorithm improves their regret bound by a factor of $O(\frac{d}{b_0})$.
    Our algorithm is similar to the algorithm in \cite{Foster2016Online}.
    The main differences are the values of the two types learning rates
    $\eta$ and $\{\lambda_{t,i}\}^T_{t=1}$.
    To be specific,
    our learning rates are larger than their learning rates by a factor
    $O(\frac{d}{b_0})$.
    We prove that the expectation of $\tilde{c}^2_{t,i}$
    and $\Vert\tilde{\nabla}_{t,i}\Vert^2_2$ only scale with $O(\frac{1}{q})$,
    rather than $O(\frac{1}{q^2})$.
    Thus the learning rates can scale with $O(\sqrt{1/q})$.
    Another difference is that our estimators \eqref{eq:IJTCS-FAW26:estimator_instance}
    make use of the observed attributes $\mathbf{x}_t[S_{I_t}]$,
    while their estimators did not use the observed attributes.
    Thus our regret bound depends on $(d-b)(d-b-1)$ rather than $d^2$.

\section{A Better Algorithm for $(d,b,b_0)$-OSLR with $b_0\leq 1$}


    If we do not use the additional attribute (in the case of $b_0=1$),
    then selecting a subset from $\{S_i\}^N_{i=1}$ is indeed a $N$-armed bandit problem
    where the loss of each action is $c_{t,i}$.
    Suppose $S_{I_t}$ is selected at round $t$.
    If we just follow \eqref{eq:IJTCS-FAW26:updating_rule} with importance sampling estimators on the loss and gradient,
    then the variance of gradient estimator would scale with $\frac{1}{p_{t,i}}$, $i\in[N]$.
    The challenge is that it is hard to control $p_{t,i}$ which can approach to $0$.
    For instance, if $p_{t,i}=O(\frac{1}{T})$,
    then it is impossible to obtain a $o(T)$ regret bound.
    The same technical challenge arose in the problem of corralling a band of bandit algorithms
    \cite{Agarwal2017Corralling}
    and online kernel selection under bandit feedback \cite{Li2022Improved}.
    To address this issue,
    a customized master algorithm called $\mathrm{CORRAL}$ \cite{Agarwal2017Corralling}
    is established to carefully update $\mathbf{p}_t$.
    To be specific,
    $\mathrm{CORRAL}$ follows OMD with the regularizer
    $\psi_t({\bf p})=-\sum^N_{i=1}\frac{1}{\eta_t}\ln{p_i}$
    and non-decreasing learning rates $\eta_1\leq\eta_2\leq\ldots\leq\eta_T$.
    In this way,
    there is a negative term in the upper bound that scales with $\max_t\frac{1}{p_{t,i}}$,
    and can be used to cancel with the cumulated gradient variances.

    We use $\mathrm{CORRAL}$ (Algorithm \ref{alg:IJTCS-FAW2026:CORRAL}) as a black-box algorithm
    that receives $c_{1,I_1},\ldots,c_{t-1,I_{t-1}}$ as inputs
    and outputs $\mathbf{p}_t$.
    Then we sample $S_{I_t}$ where $I_t\in[N]$ follows $\mathbf{p}_t$
    and use $f_{t,I_t}$ to make a prediction
    $
        \hat{y}_t=f_{t,I_t}(\mathbf{x}_t)
    $.
    Now we define an unbiased gradient estimator by
    \begin{equation}
    \label{eq:IJTCS-FAW26:estimator_b_0=0}
        \forall i\in [N],\quad
        \tilde{\nabla}_{t,i}=\frac{\nabla_{t,i}}{p_{t,i}}\cdot\mathbb{I}_{i=I_t}.
    \end{equation}
    Note that there is no need to define $\tilde{c}_{t,i}$,
    which is constructed by CORRAL.
    We give the pseudo-code in Algorithm \ref{alg:IJTCS-FAW2026:=0}.

    \begin{algorithm}[!t]
        \caption{Algorithm for $(d,b,b_0)$-OSLR with $b_0\leq 1$}
        \footnotesize
        \label{alg:IJTCS-FAW2026:=0}
        \begin{algorithmic}[1]
        \Require {$U, b,  N$}
        \Ensure  $\{\mathbf{w}_{1,i}=\mathbf{0}\}^N_{i=1}$\;
        \For{$t=1,\ldots,T$}
            \State Obtain ${\bf p}_t$ by $\mathrm{CORRAL}(c_{1,I_1},\ldots,c_{t-1,I_{t-1}})$
            (Algorithm \ref{alg:IJTCS-FAW2026:CORRAL}).
            \State Sample $I_t\in[N]$ following ${\bf p}_t$ and observe ${\bf x}_t[S_{I_t}]$;
            \State Output the prediction $\hat{y}_t=\langle {\bf w}_{t,I_t},{\bf x}_t[S_{I_t}]\rangle$;
            \State Compute $\{\tilde{\nabla}_{t,i}\}^N_{i=1}$ by \eqref{eq:IJTCS-FAW26:estimator_b_0=0}
            \State Compute
            $\{\mathbf{w}_{t+1,i}\}^N_{i=1}$ by \eqref{eq:IJTCS-FAW26:updating_rule}
        \EndFor
        \end{algorithmic}
    \end{algorithm}

    \begin{algorithm}[!t]
        \caption{CORRAL \cite{Agarwal2017Corralling}}
        \footnotesize
        \label{alg:IJTCS-FAW2026:CORRAL}
        \begin{algorithmic}[1]
        \Require {$c_{1,I_1},\ldots,c_{t,I_{t}}$}
        \Ensure  $\gamma=\frac{1}{T},\beta=\mathrm{e}^{\frac{1}{\ln{T}}},\eta_{1,i}=\eta, \rho_{1,i}=2N$
        for all $i\in[N]$, $\mathbf{q}_1=\mathbf{p}_1=\frac{\mathbf{1}_N}{N}$\;
        \For{$t=1,\ldots,T$}
                \State $\forall i\in[N]$, $\tilde{c}_{t,i}=\frac{c_{t,i}}{p_{t,i}}\mathbb{I}_{i=I_t}$;
            \State Find $\lambda\in[\min_i\tilde{c}_{t,i},\max_i\tilde{c}_{t,i}]$ such that
                    $\sum^N_{i=1}\frac{1}{q^{-1}_{t,i}+\eta_{t,i}(\tilde{c}_{t,i}-\lambda)}=1$;
            \State Compute $\mathbf{q}_{t+1}$ such that $\frac{1}{q_{t+1,i}}=\frac{1}{q_{t,i}}+\eta_{t,i}(\tilde{c}_{t,i}-\lambda)$;
            \State Set $\mathbf{p}_{t+1}=(1-\gamma)\mathbf{q}_{t+1}+\gamma\frac{\mathbf{1}_N}{N}$;
            \For{$i=1,\ldots,N$}
                \If{$\frac{1}{p_{t+1,i}}>\rho_{t,i}$}
                    \State Set $\rho_{t+1,i}=\frac{2}{p_{t+1,i}}$, $\eta_{t+1,i}=\beta\eta_{t,i}$;
                \Else
                    \State Set $\rho_{t+1,i}=\rho_{t,i}$ and $\eta_{t+1,i}=\eta_{t.i}$;
                \EndIf
            \EndFor
        \EndFor
        \State Output: $\mathbf{p}_{t+1}$.
        \end{algorithmic}
    \end{algorithm}

    \begin{theorem}
    \label{thm:IJTCS-FAW:regret_0}
        Denote by $L=\max_{t,i}\Vert \nabla_{t,i}\Vert_2$ and
        $c=\max_{t,i}c_{t,i}$.
        Let
        $$
            \forall i\in[N],t\in[T],\quad
            \lambda_{t,i}=\frac{U}{\sqrt{1+\sum^t_{\tau=1}\left\Vert\tilde{\nabla}_{\tau,i}\right\Vert^2_2}},\qquad
            \eta=\frac{\sqrt{N}}{cUL\sqrt{T}}.
        $$
        The expected regret of Algorithm \ref{alg:IJTCS-FAW2026:=0} satisfies
        \begin{align*}
            \forall {\bf w}\in\mathcal{W}_b,\quad\mathbb{E}\left[\mathrm{Reg}({\bf w})\right]
            =O\left(cUL\sqrt{NT}\ln{T}\right).
        \end{align*}
    \end{theorem}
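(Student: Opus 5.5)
We split the regret of Algorithm \ref{alg:IJTCS-FAW2026:=0} against any $\mathbf{w}\in\mathcal{W}_b$ with support in some $S_{i^\ast}$ into two terms:
\begin{align*}
\mathbb{E}[\mathrm{Reg}(\mathbf{w})]
=\mathbb{E}\Big[\sum_{t=1}^T c_{t,I_t}-\sum_{t=1}^T c_{t,i^\ast}\Big]
+\mathbb{E}\Big[\sum_{t=1}^T c_{t,i^\ast}-\sum_{t=1}^T(\langle\mathbf{w}[S_{i^\ast}],\mathbf{x}_t[S_{i^\ast}]\rangle-y_t)^2\Big].
\end{align*}
The first term is the master (model-selection) regret of CORRAL; the second is the regret of the base learner $i^\ast$, i.e.\ projected OGD with adaptive step sizes $\lambda_{t,i^\ast}$ run on the importance-weighted gradients \eqref{eq:IJTCS-FAW26:estimator_b_0=0}. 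Two facts make this work. The losses $c_{t,i}$ are bounded by $c$. The fed-back loss $c_{t,I_t}$ is exactly what CORRAL needs, since $c_{t,I_t}$ is computable from $\mathbf{x}_t[S_{I_t}]$ and $y_t$ alone, so $b_0\le 1$ suffices.

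\textbf{Base-learner term.} By convexity of the square loss, $c_{t,i^\ast}-\ell_t(\mathbf{w})\le\langle\nabla_{t,i^\ast},\mathbf{w}_{t,i^\ast}-\mathbf{w}\rangle$. Since $\mathbb{E}_{I_t}[\tilde\nabla_{t,i^\ast}]=\nabla_{t,i^\ast}$ and $\mathbf{w}_{t,i^\ast}$ is $\mathcal{F}_{t-1}$-measurable, we may replace $\nabla$ with $\tilde\nabla$ inside the expectation. The standard AdaGrad-norm analysis of projected OGD on the ball of radius $U$ with $\lambda_{t,i}=U/\sqrt{1+\sum_{\tau\le t}\Vert\tilde\nabla_{\tau,i}\Vert^2}$ then gives $\sum_t\langle\tilde\nabla_{t,i^\ast},\mathbf{w}_{t,i^\ast}-\mathbf{w}\rangle\le O(U\sqrt{1+\sum_t\Vert\tilde\nabla_{t,i^\ast}\Vert_2^2})$. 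To bound the square-root sum we use $\Vert\tilde\nabla_{t,i}\Vert^2=\Vert\nabla_{t,i}\Vert^2\mathbb{I}_{i=I_t}/p_{t,i}^2\le L^2\mathbb{I}_{i=I_t}\rho_{T,i}/p_{t,i}$, where $\rho_{T,i}\ge\max_t 1/p_{t,i}$ (up to a factor $2$) is the threshold tracked by CORRAL. Jensen then gives
\[
\mathbb{E}\,U\sqrt{\textstyle\sum_t\Vert\tilde\nabla_{t,i^\ast}\Vert^2}\le UL\sqrt{T\,\mathbb{E}[\rho_{T,i^\ast}]}.
\]
Written this way, the base learner satisfies exactly the ``stability'' condition required by CORRAL: its regret under importance weighting with probabilities $p_{t,i}$ is at most $\rho^{1/2}R(T)$ with $R(T)=O(UL\sqrt{T})$.

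\textbf{Master term.} We invoke the CORRAL guarantee of \cite{Agarwal2017Corralling} (Theorem 5 there). If base $i^\ast$ satisfies the stability condition with exponent $\alpha=1/2$ and constant $R(T)$, CORRAL with $\beta=e^{1/\ln T}$ and $\gamma=1/T$ gives
\[
\mathbb{E}\Big[\sum_t c_{t,I_t}-c_{t,i^\ast}\Big]\le O\Big(c\frac{N\ln T}{\eta}+cT\eta\Big)-\mathbb{E}\Big[\frac{\rho_{T,i^\ast}}{40\eta\ln T}\Big]c+\mathbb{E}\big[\rho_{T,i^\ast}^{1/2}\big]UL\sqrt{T}.
\]
The loss scale $c$ enters because losses lie in $[0,c]$; we would rescale the losses by $c$ to apply the $[0,1]$ version. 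The negative term cancels the positive term via $\sup_{\rho>0}\{a\sqrt\rho-b\rho\}=a^2/(4b)$, which leaves $O(\eta\ln T\, U^2L^2T/c)$. With $\eta=\sqrt N/(cUL\sqrt T)$, each of the three leftover terms is at most $O(cUL\sqrt{NT}\ln T)$, using $c,UL\ge$ constant scale after normalization and absorbing constants. This yields the claimed bound.

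\textbf{Main obstacle.} The hard step is the bookkeeping that makes the base learner's variance depend only on $\rho_{T,i^\ast}$ rather than on $\sum_t 1/p_{t,i^\ast}$ in an uncontrolled way. We also have to check that the AdaGrad-type learner, whose step sizes are random and adapted, still yields the $\sqrt{\rho}$ stability form inside an expectation. Doing so needs the inequality $1/p_{t,i}\le\rho_{t,i}$ at every round, which CORRAL's doubling rule guarantees up to a factor $2$, and a careful Jensen step with $\rho_{T,i^\ast}$ pulled out before taking the expectation. Matching the normalization of the losses (scale $c$) with the scale of the gradients ($L$) when choosing $\eta$ is routine.
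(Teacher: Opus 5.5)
Your proposal is correct and follows the paper's proof essentially step for step. You use the same split into CORRAL's master regret plus the AdaGrad regret of base learner $i^\ast$, the same bound $\Vert\tilde\nabla_{t,i}\Vert_2^2\le L^2\rho_{T,i}\mathbb{I}_{I_t=i}/p_{t,i}$, and the same cancellation against CORRAL's $-\rho_{T,i}/(40\eta\ln T)$ term. The paper does that cancellation pathwise by AM--GM and then uses $\mathbb{E}[\sum_t\mathbb{I}_{I_t=i}/p_{t,i}]=T$. Your ``Jensen'' step should instead be Cauchy--Schwarz, $\mathbb{E}[\sqrt{\rho_{T,i}\sum_t\mathbb{I}_{I_t=i}/p_{t,i}}]\le\sqrt{T\,\mathbb{E}[\rho_{T,i}]}$, because $\rho_{T,i}$ is random and cannot be pulled out of the expectation.

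There is one bookkeeping slip. The algorithm feeds CORRAL the raw losses with $\eta_{1,i}=\eta$, so the guarantee to use is the paper's $O(N\ln T/\eta+c^2T\eta)-\rho_{T,i}/(40\eta\ln T)$, not your $c$-rescaled version. With the prescribed $\eta$, your version makes the leading term $c^2UL\sqrt{NT}\ln T$ instead of $cUL\sqrt{NT}\ln T$.
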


    Omitting the problem-independent constant factors,
    compared with the lower bound in Theorem \ref{thm:IJTSC-FAW26:lower_bound_regret:OSLR},
    Algorithm \ref{alg:IJTCS-FAW2026:=0} achieves a $O(\sqrt{NT}\ln{T})$ regret bound,
    and is optimal up to $O(\ln{T})$.
    In the case of $b_0\leq 1$,
    previous algorithm \cite{Ito2018Online} enjoys
    a $O\left(N^{\frac{1-q}{3-2q}}T^{\frac{2-q}{3-2q}}\right),q\in(0,1)$ regret bound.
    Algorithm \ref{alg:IJTCS-FAW2026:=0} improves their regret bound in terms of $T$.
    The $O(\ln{T})$ factor in the upper bound can be removed by using better black-box algorithms,
    such as the algorithm in \cite{Foster2020Adapting}.
    In this work,
    we use CORRAL as the black-box algorithm for the sake of simplicity.

\section{Discussion}

    Our work opens several directions for future research.
    The first one is to prove a stronger lower bound.
    We conjecture the $O(\frac{1}{\sqrt{b}})$ factor can be removed.
    The second one is to design better algorithms
    for the cases $b_0\geq 2$, $b_0=1$ and $b_0=0$.
    We did not design a customized algorithm for the case $b_0=1$.
    In other words,
    Algorithm \ref{alg:IJTCS-FAW2026:=0} did not use the additional attributes,
    and may be suboptimal.



%
%
%
 \bibliographystyle{splncs04}
 \bibliography{IJTCS-FAW26}
%
%
%
%
%

\section*{Appendix}

\section*{A. Preliminaries and Technical Lemma}

\subsection*{A~1. Bregman Divergence}

    If $\psi_t({\bf p})=\sum^N_{i=1}\frac{1}{\eta}p_i\ln p_i$,
    then the Bregman divergence is
    \begin{align}
        \mathcal{B}_{\psi_t}({\bf p},{\bf q})
        =&\psi_t({\bf p})-\psi_t({\bf q})-\langle \nabla\psi_t({\bf q}),{\bf p}-{\bf q}\rangle\nonumber\\
        =&\sum^N_{i=1}\frac{1}{\eta}p_i\ln p_i
        -\sum^N_{i=1}\frac{1}{\eta}q_i\ln q_i
        -\sum^N_{i=1}\frac{1}{\eta}(1+\ln{q_i})\cdot(p_i-q_{i})\nonumber\\
        =&\sum^N_{i=1}\frac{1}{\eta}p_i\ln\frac{p_i}{q_i}-\sum^N_{i=1}\frac{1}{\eta}(p_i-q_i).
        \label{eq:IJTCS-FAW26:bregman_div:negative_entropy}
    \end{align}
    If $\psi_t({\bf p})=\sum^N_{i=1}\frac{-\alpha}{\eta}p^{\frac{1}{\alpha}}_i$,
    then the Bregman divergence is
    \begin{align}
        \mathcal{B}_{\psi_t}({\bf p},{\bf q})
        =&\sum^N_{i=1}\frac{-\alpha}{\eta}p^{\frac{1}{\alpha}}_i
        -\sum^N_{i=1}\frac{-\alpha}{\eta}q^{\frac{1}{\alpha}}_{i}
        -\sum^N_{i=1}\frac{-1}{\eta}q^{\frac{1}{\alpha}-1}_{i}\cdot(p_i-q_{i})\nonumber\\
        =&-\sum^N_{i=1}\frac{\alpha}{\eta}p^{\frac{1}{\alpha}}_i
        +\sum^N_{i=1}\frac{\alpha-1}{\eta}q^{\frac{1}{\alpha}}_{i}
        +\sum^N_{i=1}\frac{1}{\eta}q^{\frac{1}{\alpha}-1}_{i}p_i.
        \label{eq:IJTCS-FAW26:bregman_div:tsalli}
    \end{align}
    If $\psi_t({\bf w})=\frac{1}{2\lambda_t}\Vert {\bf w}\Vert^2_2$,
    then the Bregman divergence is
    \begin{equation}
    \label{eq:IJTCS-FAW26:bregman_div:Euclidean_reg}
        \mathcal{B}_{\psi_t}({\bf w},{\bf v})
        =\psi_t({\bf w})-\psi_t({\bf v})-\langle \nabla\psi_t({\bf v}),{\bf w}-{\bf v}\rangle
        =\frac{1}{2\lambda_{t}}\Vert {\bf w}-{\bf v}\Vert^2_2.
    \end{equation}

\subsection*{A~2. First Optimality Condition}

    \begin{lemma}[First Optimality Condition \cite{Boyd2004Convex}]
    \label{lemma:ICML2024:first_order_optimization_condition}
        Assuming that $\Psi(\cdot):\Omega\rightarrow\mathbb{R}$
        is a convex and differential function,
        and $\Omega$ is a convex domain.
        Let $f^\ast=\mathrm{argmin}_{f\in\Omega}\Psi(f)$.
        Then
        \begin{align*}
            \forall g\in\Omega,\quad
            \langle \nabla\Psi(f^\ast),g-f^\ast\rangle \geq 0.
        \end{align*}
    \end{lemma}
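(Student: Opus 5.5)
We argue directly from the definition of the gradient as a directional derivative, by perturbing the minimizer $f^\ast$ along the segment toward an arbitrary feasible point. Convexity of $\Psi$ is not needed for this direction of the statement; only convexity of $\Omega$ (to keep the segment feasible) and differentiability of $\Psi$ at $f^\ast$ are used.

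Fix any $g\in\Omega$ and, for $s\in(0,1]$, set $f_s=f^\ast+s(g-f^\ast)=(1-s)f^\ast+sg$. Since $\Omega$ is convex, $f_s\in\Omega$ for every such $s$. Because $f^\ast$ minimizes $\Psi$ over $\Omega$, we get $\Psi(f_s)\geq\Psi(f^\ast)$, and hence
\begin{align*}
    \frac{\Psi\left(f^\ast+s(g-f^\ast)\right)-\Psi(f^\ast)}{s}\geq 0,\qquad\forall s\in(0,1].
\end{align*}
Next we let $s\downarrow 0$. By differentiability of $\Psi$ at $f^\ast$, the left-hand side converges to the directional derivative of $\Psi$ at $f^\ast$ in the direction $g-f^\ast$, which equals $\langle\nabla\Psi(f^\ast),g-f^\ast\rangle$. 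A limit of nonnegative numbers is nonnegative, so $\langle\nabla\Psi(f^\ast),g-f^\ast\rangle\geq 0$. Since $g\in\Omega$ was arbitrary, the lemma follows.

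The only delicate point is the meaning of ``differentiable'' when $f^\ast$ lies on the boundary of $\Omega$. For example, for the negative-entropy regularizer on $\Delta_N$, the minimizer may sit on a face of the simplex. We read the hypothesis in the standard way: $\Psi$ extends differentiably to an open neighbourhood of $f^\ast$, or at least admits a linear one-sided directional derivative $\langle\nabla\Psi(f^\ast),\cdot\rangle$ along feasible directions. Under this reading, the limit above is a one-sided limit along the feasible segment and is justified. In the paper's applications this holds: the OMD objectives are sums of a linear term and a Bregman divergence, and they are differentiable on the relative interior where the iterates live.
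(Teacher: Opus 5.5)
Your proof is correct and follows the standard argument. The paper gives no proof of its own and simply cites Boyd--Vandenberghe, whose proof runs the same perturbation along the feasible segment $(1-s)f^\ast+sg$, using $\frac{d}{ds}\Psi\bigl((1-s)f^\ast+sg\bigr)\big|_{s=0}=\langle\nabla\Psi(f^\ast),g-f^\ast\rangle$, phrased by contradiction rather than directly and likewise assuming an open domain of differentiability as in your reading of the hypothesis. One small correction to your side remark: for the negative-entropy step the minimizer $p_{t+1,i}\propto p_{t,i}\exp(-\eta\tilde{c}_{t,i})$ is strictly positive whenever $\mathbf{p}_t$ is, so it never lies on a face of $\Delta_N$, which matches your own later observation that the iterates stay in the relative interior.
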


    Next we show some properties of OMD that will be used in the subsequent analyses.
    ${\bf p}_{t+1}$ can be rewritten as follows
    \begin{align*}
        {\bf p}_{t+1}=&\mathop{\arg\min}_{{\bf p}\in\Delta_N}\Psi_t({\bf p}),\\
        \Psi_t({\bf p})=&\langle\tilde{{\bf c}}_t,{\bf p}\rangle+\mathcal{B}_{\psi_t}({\bf p},{\bf p}_t).
    \end{align*}
    Taking derivation w.r.t. ${\bf p}_{t+1}$ gives
    \begin{equation}
    \label{eq:AAAI25-supp:derivative:OMD}
        \nabla\Psi_t({\bf p}_{t+1})=\tilde{{\bf c}}_t+\nabla\psi_t({\bf p}_{t+1})-\nabla\psi_t({\bf p}_t).
    \end{equation}
    By Lemma \ref{lemma:ICML2024:first_order_optimization_condition}, we have
    \begin{align*}
        \forall {\bf u}\in\Delta_N,\quad \left\langle\nabla \Psi_t({\bf p}_{t+1}),{\bf p}_{t+1}-{\bf u}\right\rangle
        \leq 0.
    \end{align*}

\subsection*{A~3. Solutions of OMD}

    If $\psi_t({\bf p})=\sum^N_{i=1}\frac{1}{\eta}p_i\ln p_i$,
    then
    $$
        {\bf p}_{t+1}=\mathop{\arg\min}_{{\bf p}\in\Delta_{N}}
        \left(\left\langle\tilde{{\bf c}}_t,{\bf p}\right\rangle
        +\sum^N_{i=1}\frac{1}{\eta}p_i\ln\frac{p_i}{p_{t,i}}-\sum^N_{i=1}\frac{1}{\eta}(p_i-p_{t,i})\right).
    $$
    By the Lagrangian multiplier method,
    $p_{t+1,i}$ must satisfy
    \begin{align*}
        p_{t+1,i}=\frac{p_{t,i}\exp(-\eta\tilde{c}_{t,i})}{\sum^N_{j=1}p_{t,j}\exp(-\eta\tilde{c}_{t,j})}.
    \end{align*}
    If $\psi_t({\bf p})=\sum^N_{i=1}\frac{-\alpha}{\eta}p^{\frac{1}{\alpha}}_i$,
    then
    $$
        {\bf p}_{t+1}=\mathop{\arg\min}_{{\bf p}\in\Delta_{N}}
        \left(\left\langle\tilde{{\bf c}}_t,{\bf p}\right\rangle
        -\sum^N_{i=1}\frac{\alpha}{\eta}p^{\frac{1}{\alpha}}_i
        +\sum^N_{i=1}\frac{\alpha-1}{\eta}p^{\frac{1}{\alpha}}_{t,i}
        +\sum^N_{i=1}\frac{1}{\eta}p^{\frac{1}{\alpha}-1}_{t,i}p_i\right).
    $$
    By the Lagrangian multiplier method,
    $p_{t+1,i}$ must satisfy
    \begin{align*}
        p^{\frac{1}{\alpha}-1}_{t+1,i}-p^{\frac{1}{\alpha}-1}_{t,i}=\eta(\tilde{c}_{t,i}+\lambda).
    \end{align*}
    By the constraint $\sum^N_{i=1}p_{t+1,i}=1$.
    We can solve $\mathbf{p}_{t+1}$ by binary search.

    \noindent If $\psi_{t,i}({\bf w})=\frac{1}{2\lambda_{t,i}}\Vert {\bf w}\Vert^2_2$,
    then
    \begin{align*}
        \forall i\in[N],\quad{\bf w}_{t+1,i}=&\mathop{\arg\min}_{{\bf w}\in\mathcal{W}}
        \left\langle\tilde{\nabla}_{t,i},{\bf w}\right\rangle
        +\frac{1}{2\lambda_{t,i}}\Vert {\bf w}-{\bf w}_{t,i}\Vert^2_2.
    \end{align*}
    By the Lagrangian multiplier method, it is easy to obtain ${\bf w}_{t+1,i}$.
    \begin{align*}
        \forall i\in[N],\quad
        {\bf w}_{t+1,i}
        =&\min\left\{1,\frac{U}{\Vert\tilde{{\bf w}}_{t+1,i}\Vert_2}\right\}\cdot \tilde{{\bf w}}_{t+1,i},\\
        \tilde{{\bf w}}_{t+1,i}=&{\bf w}_{t,i}-\lambda_{t,i}\tilde{\nabla}_{t,i}.
    \end{align*}

\subsection*{A~4. Technical Lemma}

    \begin{lemma}
    \label{tab:IJTCS-FAW26:second_moment:>b+2}
        Let $q=\frac{b_0(b_0-1)}{(d-b)(d-b-1)}$.
        If $\tilde{c}_{t,i}$ and $\tilde{\nabla}_{t,i}$ follows \eqref{eq:AAAI25:loss_gradient_estimator},
        then
        \begin{align*}
            \vert \tilde{c}_{t,i}\vert \leq&\frac{1}{q}U^2X^2+2YUX\frac{d-b}{b_0}+Y^2,\\
            \mathbb{E}\left[\tilde{c}^2_{t,i}\right]\leq&
            3\left(\frac{1}{q}U^4X^4+4Y^2X^2U^2\frac{d-b}{b_0}+Y^4\right),\\
            \mathbb{E}\left[\Vert\tilde{\nabla}^2_{t,i}\Vert^2_2\right]\leq& \frac{8}{q}U^2X^2+8Y^2X^2\frac{d-b}{b_0}.
        \end{align*}
    \end{lemma}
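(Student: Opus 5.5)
The plan is to write every estimator entry as a known product of attributes times an importance weight, and then bound the first and second moments of those weights uniformly. Fix $t$ and $i$, and abbreviate $S:=S_{I_t}$. For every $j\in[d]$ I will write $\hat x_{t,j}=x_{t,j}K_j$ and $h_t[j,k]=x_{t,j}x_{t,k}K_{jk}$. Here $K_j=\mathbb{I}_{A_j\subseteq O_t}/\mathbb{P}[A_j\subseteq O_t]$ with $A_j=\{j\}\setminus S$, and $K_{jk}=\mathbb{I}_{A_{jk}\subseteq O_t}/\mathbb{P}[A_{jk}\subseteq O_t]$ with $A_{jk}=\{j,k\}\setminus S$. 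This reproduces every case of \eqref{eq:IJTCS-FAW26:estimator_instance}, with the convention that an empty $A$ gives weight $1$. The sampling probabilities quoted before \eqref{eq:IJTCS-FAW26:updating_rule}, namely $\mathbb{P}[j\in O_t]=\frac{b_0}{d-b}$ and $\mathbb{P}[j,k\in O_t]=q$, give
\[
q\le\frac{b_0}{d-b}\le 1 .
\]
Hence every nonzero $K_{jk}$ is at most $1/q$, and every nonzero $K_j$ is at most $\frac{d-b}{b_0}$. Throughout I write $\mathbf{w}=\mathbf{w}_{t,i}$ and $s:=\sum_{j\in S_i}|w_j||x_{t,j}|$. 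By Cauchy--Schwarz, $s\le\Vert\mathbf{w}\Vert_2\Vert\mathbf{x}_t\Vert_2\le UX$.

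\emph{Almost-sure bound.} From \eqref{eq:AAAI25:loss_gradient_estimator}, $|\mathbf{w}^\top\mathbf{h}_t[S_i,S_i]\mathbf{w}|\le\sum_{j,k}|w_jw_kx_{t,j}x_{t,k}|K_{jk}\le s^2/q$. Likewise $|2y_t\mathbf{w}^\top\hat{\mathbf{x}}_t[S_i]|\le 2Y\frac{d-b}{b_0}s$. Adding $y_t^2\le Y^2$ and using $s\le UX$ gives the first inequality.

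\emph{Second moments, which are the main step.} The key observation is that for any index sets $A,B$,
\[
\mathbb{E}[K_AK_B]=\frac{\mathbb{P}[A\cup B\subseteq O_t]}{\mathbb{P}[A]\,\mathbb{P}[B]}\le\frac{\min(\mathbb{P}[A],\mathbb{P}[B])}{\mathbb{P}[A]\,\mathbb{P}[B]}=\frac{1}{\max(\mathbb{P}[A],\mathbb{P}[B])}.
\]
For pair sets this is at most $1/q$, and for singleton sets it is at most $\frac{d-b}{b_0}$. This is exactly what replaces the naive $1/q^2$ by $1/q$.

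For $\tilde c_{t,i}$, use $(a+b+c)^2\le 3(a^2+b^2+c^2)$. Then expand
\[
\mathbb{E}[(\mathbf{w}^\top\mathbf{h}_t[S_i,S_i]\mathbf{w})^2]=\sum_{j,k,l,m}w_jw_kw_lw_m\,x_{t,j}x_{t,k}x_{t,l}x_{t,m}\,\mathbb{E}[K_{jk}K_{lm}]\le\frac{s^4}{q}\le\frac{U^4X^4}{q}.
\]
Similarly, $\mathbb{E}[(2y_t\mathbf{w}^\top\hat{\mathbf{x}}_t[S_i])^2]\le 4Y^2\frac{d-b}{b_0}s^2\le 4Y^2U^2X^2\frac{d-b}{b_0}$. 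Together with $Y^4$, this gives the second inequality.

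For the gradient, I will use $\Vert 2\mathbf{a}-2\mathbf{b}\Vert^2\le 8\Vert\mathbf{a}\Vert^2+8\Vert\mathbf{b}\Vert^2$. The pair-weight bound gives
\[
\mathbb{E}\Vert\mathbf{h}_t[S_i,S_i]\mathbf{w}\Vert_2^2=\sum_{k}x_{t,k}^2\sum_{j,l}w_jw_lx_{t,j}x_{t,l}\,\mathbb{E}[K_{kj}K_{kl}]\le\frac{X^2s^2}{q}.
\]
Also, $\mathbb{E}\Vert\hat{\mathbf{x}}_t[S_i]\Vert^2=\sum_j x_{t,j}^2\,\mathbb{E}[K_j^2]\le\frac{d-b}{b_0}X^2$.

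One caveat: this route naturally produces $U^2X^4/q$ in the gradient bound. The stated $\frac{8}{q}U^2X^2$ then holds under the normalization $X\le 1$, or it should be read with $X^4$. I expect the only real obstacle to be bookkeeping in the expectation of products of indicator weights when indices coincide or fall inside $S$. The $\min/\max$ inequality above handles all of these cases uniformly, because coincident indices only shrink $A\cup B$.
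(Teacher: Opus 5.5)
Your argument is correct. The almost-sure bound is essentially the paper's argument, but your second-moment step takes a different route.

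The paper never handles correlations between distinct entries of $\mathbf{h}_t$. It uses three inequalities:
\begin{itemize}
\item $\vert\mathbf{w}^\top\mathbf{h}_t[S_i,S_i]\mathbf{w}\vert\le\Vert\mathbf{w}\Vert_2^2\Vert\mathbf{h}_t[S_i,S_i]\Vert_{\mathrm{F}}$;
\item $\Vert\mathbf{h}_t[S_i,S_i]\mathbf{w}\Vert_2\le\Vert\mathbf{h}_t[S_i,S_i]\Vert_{\mathrm{F}}\Vert\mathbf{w}\Vert_2$;
\item Cauchy--Schwarz for the linear term.
\end{itemize}
These reduce everything to the entrywise moments $\mathbb{E}[h_t[m,n]^2]\le x_{t,m}^2x_{t,n}^2/q$ and $\mathbb{E}[\hat{x}_{t,m}^2]\le\frac{d-b}{b_0}x_{t,m}^2$. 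In your notation, the paper only needs $\mathbb{E}[K_A^2]=1/\mathbb{P}[A\subseteq O_t]$.

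You instead expand the quadratic forms into multi-index sums and control the mixed moments through $\mathbb{E}[K_AK_B]\le 1/\max(\mathbb{P}[A],\mathbb{P}[B])$. The paper's route is shorter and needs no bookkeeping of overlapping index sets. Yours shows explicitly why dependence between entries cannot bring back a $1/q^2$ factor. It also gives the slightly sharper intermediate quantity $s\le\Vert\mathbf{w}\Vert_2\Vert\mathbf{x}_t[S_i]\Vert_2$, though both routes end with identical constants.

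Your caveat about the third inequality is justified. The paper's own chain gives $8U^2\,\mathbb{E}\Vert\mathbf{h}_t[S_i,S_i]\Vert_{\mathrm{F}}^2\le\frac{8}{q}U^2X^4$. So the $X^2$ in the third inequality is a typo, like the stray square in $\tilde{\nabla}^2_{t,i}$. As printed, it fails for large $X$. To see this, take $i=I_t$, $y_t=0$, and $\mathbf{x}_t[S_i]$ parallel to $\mathbf{w}_{t,i}$ with $\Vert\mathbf{w}_{t,i}\Vert_2=U$. The estimator is then exact and $\Vert\tilde{\nabla}_{t,i}\Vert_2^2=4U^2X^4$.
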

    \begin{proof}
        By \eqref{eq:IJTCS-FAW26:estimator_instance},
        we have
        \begin{align*}
            \vert \tilde{c}_{t,i}\vert
            =&\left\vert {\bf w}^\top_{t,i}\mathbf{h}_t[S_i,S_i]{\bf w}_{t,i}
            -2y_t{\bf w}^\top_{t,i}\hat{\bf x}_t[S_i]+y^2_t\right\vert\\
            \leq&U^2\left\Vert\mathbf{h}_t[S_i,S_i]\right\Vert_{\mathrm{F}}
            +2YU\Vert\hat{\bf x}_t[S_i]\Vert_2+Y^2\\
            =&U^2\sqrt{\sum_{m,n\in S_i}h^2_t[m,n]}
            +2YU\sqrt{\sum_{m\in S_i}\hat{x}^2_{t,i}}+Y^2\\
            \leq&U^2\sqrt{\frac{1}{q^2}\sum_{m,n\in S_i}x^2_{t,m}x^2_{t,n}}
            +2YU\sqrt{\sum_{m\in S_i}\frac{(d-b)^2}{b^2_0}x^2_{t,m}}+Y^2\\
            \leq&\frac{1}{q}U^2X^2+2YUX\frac{d-b}{b_0}+Y^2.
        \end{align*}
        Next we analyze the second-order central moment.
        \begin{align*}
            \mathbb{E}[\tilde{c}^2_{t,i}]
            =&\mathbb{E}\left[\left({\bf w}^\top_{t,i}\mathbf{h}_t[S_i,S_i]{\bf w}_{t,i}
            -2y_t{\bf w}^\top_{t,i}\hat{\bf x}_t[S_i]+y^2_t\right)^2\right]\\
            \leq&3\mathbb{E}\left[\left({\bf w}^\top_{t,i}\mathbf{h}_t[S_i,S_i]{\bf w}_{t,i}\right)^2+
            \left(2y_t{\bf w}^\top_{t,i}\hat{\bf x}_t[S_i]\right)^2+y^4_t\right]\\
            \leq&3\mathbb{E}\left[\Vert{\bf w}_{t,i}\Vert^4_2\cdot\Vert\mathbf{h}_t[S_i,S_i]\Vert^2_{\mathrm{F}}+
            4y^2_t\left\Vert{\bf w}_{t,i}\Vert^2_2\cdot\Vert\hat{\bf x}_t[S_i]\right\Vert^2_2+y^4_t\right]\\
            \leq&3\left(U^4\cdot\sum_{m,n\in S_i}\mathbb{E}[ h^2_t[m,n]]+
            4Y^2U^2\cdot\sum_{m\in S_i}\mathbb{E}[\hat{x}^2_{t,m}]+Y^4\right)\\
            \leq&3\left(\frac{U^4}{q}\cdot\Vert (\mathbf{x}_t\mathbf{x}^\top_t)[S_i,S_t]\Vert^2_F+
            4Y^2U^2\cdot\frac{d-b}{b_0}\cdot \Vert\mathbf{x}_t[S_i]\Vert^2_2+Y^4\right)\\
            \leq&3\left(\frac{1}{q}U^4X^4+4Y^2X^2U^2\frac{d-b}{b_0}+Y^4\right),
        \end{align*}
        in which
        \begin{align*}
            \mathbb{E}[h^2_t[m,n]]=&x^2_{t,m}x^2_{t,n}\cdot\mathbb{I}_{m,n\in S_{I_t}}
            +x^2_{t,m}\frac{x^2_{t,n}}{\mathbb{P}[n\in O_t]}\cdot\mathbb{I}_{m\in S_{I_t}, n\notin S_{I_t}}
            +\\
            &x^2_{t,n}\frac{x^2_{t,m}}{\mathbb{P}[m\in O_t]}\cdot\mathbb{I}_{m\notin S_{I_t}, n\in S_{I_t}}+
            \frac{x^2_{t,m}x^2_{t,n}}{\mathbb{P}[m,n\in O_t]}\cdot\mathbb{I}_{m,n\notin S_{I_t}}\\
            \leq&\frac{x^2_{t,m}x^2_{t,n}}{\mathbb{P}[m,n\in O_t]}\\
            \leq&\frac{x^2_{t,m}x^2_{t,n}}{q},
        \end{align*}
        and
        \begin{align*}
            \mathbb{E}[\hat{x}^2_{t,m}]
            =x^2_{t,m}\cdot\mathbb{I}_{m\in S_{I_t}}
            +\frac{x^2_{t,m}}{\mathbb{P}[m\in O_t]}\cdot\mathbb{I}_{m\notin S_{I_t}}
            \leq\frac{x^2_{t,m}}{\mathbb{P}[m\in O_t]}
            \leq \frac{d-b}{b_0}x^2_{t,m}.
        \end{align*}
        For the second inequality,
        \begin{align*}
            \mathbb{E}\left[\left\Vert\tilde{\nabla}_{t,i}\right\Vert^2_2\right]
            =&\mathbb{E}\left[\Vert 2\mathbf{h}_t[S_i,S_i]{\bf w}_{t,i}-2y_t\hat{\bf x}_t[S_i]\Vert^2_2\right]\\
            \leq&\mathbb{E}\left[2\Vert 2\mathbf{h}_t[S_i,S_i]{\bf w}_{t,i}\Vert^2_2+
            2\Vert2y_t\hat{\bf x}_t[S_i]\Vert^2_2\right]\\
            \leq&8U^2\mathbb{E}\left[\Vert \mathbf{h}_t[S_i,S_i]\Vert^2_{\mathrm{F}}\right]+
            8Y^2\mathbb{E}\left[\Vert \hat{\bf x}_t[S_i]\Vert^2_2\right]\\
            \leq&\frac{8}{q}U^2X^2+8Y^2X^2\frac{d-b}{b_0}.
        \end{align*}
        We conclude the proof.
    \end{proof}

\section*{B. Proof of Theorem \ref{thm:IJTCS-FAW:regret_b>1}}

    \begin{proof}
        For any ${\bf w}'\in \mathcal{W}_b$, there exists a ${\bf w}\in\mathcal{W}$ and $i\in[N]$
        such that
        \begin{align*}
            \sum^T_{t=1}(\langle{\bf w}',{\bf x}_t\rangle -y_t)^2
            =\sum^T_{t=1}(\langle{\bf w},{\bf x}_t[S_i]\rangle -y_t)^2.
        \end{align*}
        Without loss of generality,
        suppose that $\mathrm{Supp}({\bf w}')$ is a subset of $S_i$.
        We decompose the regret as follows
        \begin{align*}
            \mathrm{Reg}({\bf w})
            =&\sum^T_{t=1}(\hat{y}_t-y_t)^2-\sum^T_{t=1}(\langle{\bf w},{\bf x}_t[S_i]\rangle -y_t)^2\\
            =&\sum^T_{t=1}\left[(\langle{\bf w}_{t,I_{t,1}},{\bf x}_t[S_{I_{t,1}}]\rangle-y_t)^2-(\langle{\bf w}_{t,i},{\bf x}_t[S_i]\rangle -y_t)^2\right]+\\
            &\sum^T_{t=1}\left[(\langle{\bf w}_{t,i},{\bf x}_t[S_i]\rangle -y_t)^2-(\langle{\bf w},{\bf x}_t[S_i]\rangle -y_t)^2\right]\\
            =&\underbrace{\sum^T_{t=1}\left[c_{t,I_{t,1}}-c_{t,i}\right]}_{R_1}+
            \underbrace{\sum^T_{t=1}\left[(\langle{\bf w}_{t,i},{\bf x}_t[S_i]\rangle -y_t)^2-(\langle{\bf w},{\bf x}_t[S_i]\rangle -y_t)^2\right]}_{R_2}.
        \end{align*}
        Next we separately analyze $R_1$ and $R_2$.

\subsection*{B~1. Analyzing $R_1$}

        Let $\mathbf{u}\in \Delta_N$ and $u_i=1$.
        We analyze the instantaneous regret.
        \begin{align*}
            \langle {\bf p}_t-{\bf u},{\bf c}_t\rangle
            =&\langle {\bf p}_t-{\bf u},\tilde{{\bf c}}_t\rangle
            +\langle {\bf p}_t-{\bf u},{\bf c}_t-\tilde{{\bf c}}_t\rangle\\
            =&\underbrace{\langle {\bf p}_{t+1}-{\bf u},\tilde{{\bf c}}_t\rangle}_{\Xi_{1,1}}
            +\underbrace{\langle {\bf p}_t-{\bf p}_{t+1},\tilde{{\bf c}}_t\rangle}_{\Xi_{1,2}}
            +\langle {\bf p}_t-{\bf u},{\bf c}_t-\tilde{{\bf c}}_t\rangle.
        \end{align*}
        Next we give an upper bound on $\Xi_{1,1}$.
        \begin{align*}
            &\mathcal{B}_{\psi_t}({\bf u},{\bf p}_t)
            -\mathcal{B}_{\psi_t}({\bf u},{\bf p}_{t+1})
            -\mathcal{B}_{\psi_t}({\bf p}_{t+1},{\bf p}_t)\\
            =&\langle\nabla\psi_t({\bf p}_t) -\nabla\psi_t({\bf p}_{t+1}),{\bf p}_{t+1}-{\bf u}\rangle\\
            =&\langle\tilde{{\bf c}}_t+\nabla\psi_t({\bf p}_t) -\nabla\psi_t({\bf p}_{t+1})
            -\tilde{{\bf c}}_t,{\bf p}_{t+1}-{\bf u}\rangle\\
            \mathop{=}^{\eqref{eq:AAAI25-supp:derivative:OMD}}&
            \langle\tilde{{\bf c}}_t,{\bf p}_{t+1}-{\bf u}\rangle
            -\langle\nabla \Psi_t({\bf p}_{t+1}),{\bf p}_{t+1}-{\bf u}\rangle\\
            \geq&\langle\tilde{{\bf c}}_t,{\bf p}_{t+1}-{\bf u}\rangle,
            \quad(\mathrm{Lemma}~\ref{lemma:ICML2024:first_order_optimization_condition})
        \end{align*}
        where ${\bf p}_{t+1}\in\Delta_{N}$ and ${\bf u}\in\Delta_N$.
        Similarly,
        $\Xi_{1,2}$ can be rewritten by
        \begin{align*}
            \mathcal{B}_{\psi_t}({\bf p}_t,{\bf p}_t)
            -\mathcal{B}_{\psi_t}({\bf p}_t,{\bf p}_{t+1})
            -\mathcal{B}_{\psi_t}({\bf p}_{t+1},{\bf p}_t)
            =&\langle\nabla\psi_t({\bf p}_t) -\nabla\psi_t({\bf p}_{t+1}),{\bf p}_{t+1}-{\bf p}_t\rangle\\
            =&\langle \tilde{{\bf c}}_t,{\bf p}_{t+1}-{\bf p}_t\rangle.
        \end{align*}
        Taking expectation w.r.t. $\{O_t\}^T_{t=1}$ gives
        \begin{align*}
            &\mathbb{E}\left[\sum^T_{t=1}\langle {\bf p}_t-{\bf u},{\bf c}_t\rangle\right]\\
            \leq&\mathbb{E}\left[\sum^T_{t=1}\left[\mathcal{B}_{\psi_t}({\bf u},{\bf p}_t)
            -\mathcal{B}_{\psi_t}({\bf u},{\bf p}_{t+1})\right]+
            \sum^T_{t=1}\mathcal{B}_{\psi_t}({\bf p}_t,{\bf p}_{t+1})+
            \sum^T_{t=1}\langle {\bf p}_t-{\bf u},{\bf c}_t-\tilde{{\bf c}}_t\rangle\right]\\
            \leq&\mathcal{B}_{\psi_t}({\bf u},{\bf p}_1)
            +\mathbb{E}\left[\sum^T_{t=1}\mathcal{B}_{\psi_t}({\bf p}_t,{\bf p}_{t+1})\right],
        \end{align*}
        in which $\mathbb{E}_t[\tilde{{\bf c}}_t]={\bf c}_t$ and $\mathbf{p}_1=\frac{1}{N}\mathbf{1}_N$.
        We further analyze the second term.
        Let
        \begin{align*}
            \nabla\psi_t(\tilde{{\bf p}}_{t+1})=\nabla\psi_t({\bf p}_t)-\tilde{\bf c}_t.
        \end{align*}
        which gives
        \begin{equation}
        \label{eq:IJTCS-FAW26:OMD_updating_rule:entropy}
            \forall i=1,\ldots,N,\quad
            \ln{\tilde{p}_{t+1,i}}=\ln{(p_{t,i})}-\eta\tilde{c}_{t,i}.
        \end{equation}
        Then ${\bf p}_{t+1}$ can be equivalently defined as follows
        \begin{align*}
            {\bf p}_{t+1}=\mathop{\arg\min}_{{\bf p}\in\Delta_N}\mathcal{B}_{\psi_t}({\bf p},\tilde{{\bf p}}_{t+1}).
        \end{align*}
        It use the inequality
        $\exp(-x)\leq 1-x+x^2$ for $x\in[-1,1]$.
        Thus we can obtain
        \begin{align*}
            &\mathbb{E}\left[\sum^T_{t=1}\mathcal{B}_{\psi_t}({\bf p}_t,{\bf p}_{t+1})\right]
            \leq \mathbb{E}\left[\sum^T_{t=1}\mathcal{B}_{\psi_t}({\bf p}_t,\tilde{\bf p}_{t+1})\right]\\
            \mathop{=}^{\eqref{eq:IJTCS-FAW26:bregman_div:tsalli}}&
            \mathbb{E}\left[\sum^T_{t=1}\frac{1}{\eta}\left(
        \sum^N_{i=1}p_{t,i}\ln\frac{p_{t,i}}{\tilde{p}_{t+1,i}}-\sum^N_{i=1}(p_{t,i}-\tilde{p}_{t+1,i})\right)\right]\\
            \mathop{=}^{\eqref{eq:IJTCS-FAW26:OMD_updating_rule:entropy}}&
            \mathbb{E}\left[\sum^T_{t=1}\frac{1}{\eta}\left(
        \sum^N_{i=1}\eta p_{t,i}\tilde{c}_{t,i}-\sum^N_{i=1}(p_{t,i}-p_{t,i}\exp(-\eta\tilde{c}_{t,i}))\right)\right]\\
            \leq& \mathbb{E}\left[\sum^T_{t=1}\frac{1}{\eta}\left(
        \sum^N_{i=1}\eta p_{t,i}\tilde{c}_{t,i}+\sum^N_{i=1}p_{t,i}\left(1-\eta\tilde{c}_{t,i}+\eta^2\tilde{c}^2_{t,i}\right)-1\right)\right]\\
            =&\mathbb{E}\left[\sum^T_{t=1}\eta\sum^N_{i=1}p_{t,i}\tilde{c}^2_{t,i}\right]\\
        \leq&3\eta T\left(\frac{1}{q}U^4X^4+4Y^2X^2U^2\cdot\frac{d-b}{b_0}+Y^4\right),
        \qquad\mathrm{by}~\mathrm{Lemma}~\ref{tab:IJTCS-FAW26:second_moment:>b+2}
        \end{align*}
        where we have $\eta\max\vert\tilde{c}_{t,i}\vert\leq 1$ and
        Combining the above results,
        we have
        \begin{align*}
            \mathbb{E}[R_1]=&\mathbb{E}\left[\sum^T_{t=1}\langle{\bf p}_t-{\bf u},{\bf c}_t\rangle\right]\\
            \leq&\frac{1}{\eta}\ln{N}+
            3\eta T\left(\frac{1}{q}U^4X^4+4Y^2X^2U^2\cdot\frac{d-b}{b_0}+Y^4\right)\\
            \leq&2\sqrt{3 T\left(\frac{1}{q}U^4X^4+4Y^2X^2U^2\cdot\frac{d-b}{b_0}+Y^4\right)\ln{N}}.
        \end{align*}

\subsection*{B~2. Analyzing $R_2$}

        The analysis is similar to the analysis of $R_1$.
        \begin{align*}
            \mathbb{E}\left[R_2\right]
            \leq&\mathbb{E}\left[\sum^T_{t=1}\langle \nabla_{t,i},{\bf w}_{t,i}-{\bf w}\rangle\right]\\
            =&\mathbb{E}\left[\sum^T_{t=1}\left\langle \tilde{\nabla}_{t,i},{\bf w}_{t,i}-{\bf w}\right\rangle
            +\sum^T_{t=1}\left\langle {\bf w}_{t,i}-{\bf w},\nabla_{t,i}-\tilde{\nabla}_{t,i}\right\rangle\right]\\
            \leq&\mathbb{E}\left[\sum^T_{t=1}\left[\mathcal{B}_{\psi_{t,i}}({\bf w},{\bf w}_{t,i})
            -\mathcal{B}_{\psi_{t,i}}({\bf w},{\bf w}_{t+1,i})\right]+
            \sum^T_{t=1}\mathcal{B}_{\psi_{t,i}}({\bf w}_{t,i},{\bf w}_{t+1,i})\right],
        \end{align*}
        where
        $$
            \mathbb{E}\left[\sum^T_{t=1}\left\langle {\bf w}_{t,i}-{\bf w},\nabla_{t,i}-\tilde{\nabla}_{t,i}\right\rangle\right]
            =\sum^T_{t=1}\mathbb{E}_t\left[\left\langle {\bf w}_{t,i}-{\bf w},\nabla_{t,i}-\tilde{\nabla}_{t,i}\right\rangle\right]
            =0.
        $$
        Recalling that
        \begin{align*}
            \lambda_{t,i}=\frac{U}{\sqrt{1+\sum^t_{\tau=1}\left\Vert\tilde{\nabla}_{\tau,i}\right\Vert^2_2}}.
        \end{align*}
        By \eqref{eq:IJTCS-FAW26:bregman_div:Euclidean_reg},
        we have
        \begin{align*}
            \sum^T_{t=1}\mathcal{B}_{\psi_{t,i}}({\bf w}_{t,i},{\bf w}_{t+1,i})
            \leq\sum^T_{t=1}\frac{\Vert {\bf w}_{t,i}-\tilde{\bf w}_{t+1,i}\Vert^2_2}{2\lambda_{t,i}}
            \leq\sum^T_{t=1}\frac{\lambda_{t,i}}{2}\left\Vert \tilde{\nabla}_{t,i}\right\Vert^2_2
            \leq U\sqrt{\sum^T_{t=1}\Vert \tilde{\nabla}_{t,i}\Vert^2_2},
        \end{align*}
        and
        \begin{align*}
            &\sum^T_{t=1}[\mathcal{B}_{\psi_{t,i}}({\bf w},{\bf w}_{t,i})
            -\mathcal{B}_{\psi_{t,i}}({\bf w},{\bf w}_{t+1,i})]\\
            =&\sum^T_{t=1}\frac{1}{2\lambda_{t,i}}\Vert {\bf w}-{\bf w}_{t,i}\Vert^2_2
            -\frac{1}{2\lambda_{t,i}}\Vert {\bf w}-{\bf w}_{t+1,i}\Vert^2_2\\
            \leq&\frac{1}{2}\sum^{T-1}_{t=1}\Vert {\bf w}-{\bf w}_{t+1,i}\Vert^2_2\cdot\left(\frac{1}{\lambda_{t+1,i}}
            -\frac{1}{\lambda_{t,i}}\right)+\frac{\Vert {\bf w}-{\bf w}_{1,i}\Vert^2_2}{2\lambda_1}\\
            \leq&\frac{U^2}{\lambda_{T,i}}
            =U\sqrt{1+\sum^T_{\tau=1}\left\Vert\tilde{\nabla}_{\tau,i}\right\Vert^2_2}.
        \end{align*}
        By Lemma \ref{tab:IJTCS-FAW26:second_moment:>b+2},
        taking expectation w.r.t. $\{O_t\}^T_{t=1}$ gives
        \begin{align*}
            \mathbb{E}[R_2]
            \leq2U\sqrt{\sum^T_{t=1}\mathbb{E}\left[\left\Vert \tilde{\nabla}_{t,i}\right\Vert^2_2\right]}+U
            \leq2U\sqrt{\frac{8U^2X^2T}{q}+\frac{d-b}{b_0}8Y^2X^2T}+U,
        \end{align*}
        where the last inequality comes from
        Lemma \ref{tab:IJTCS-FAW26:second_moment:>b+2}.

        Combining the upper bound on $\mathbb{E}[R_1]$ and $\mathbb{E}[R_2]$ gives
        \begin{align*}
            &\mathbb{E}[\mathrm{Reg}({\bf w})]\\
            \leq&\mathbb{E}[R_1]+\mathbb{E}[R_2]\\
            \leq&2\sqrt{3 T\left(\frac{U^4X^4}{q}+4Y^2X^2U^2\frac{d-b}{b_0}+Y^4\right)\ln{N}}+
            2U\sqrt{\frac{8U^2X^2T}{q}+8Y^2X^2\frac{d-b}{b_0}T}+U.
        \end{align*}

    \end{proof}

\section*{C. Proof of Theorem \ref{thm:IJTCS-FAW:regret_0}}

    By the regret of CORRAL,
    we can directly give an upper bound $R_1$.

    \begin{lemma}[Lemma 13 \cite{Agarwal2017Corralling}]
        Suppose $\max_{t,i}c_{t,i}=c$.
        Let $\rho_{T,i}$ follow Algorithm \ref{alg:IJTCS-FAW2026:CORRAL}.
        Then CORRAL ensures that
        $$
            R_1=O\left(\frac{N\ln{T}}{\eta}+c^2T\eta\right)+\sum^T_{t=1}\frac{2c_{t,I_t}}{TN}-\frac{\rho_{T,i}}{40\eta \ln{T}}.
        $$
    \end{lemma}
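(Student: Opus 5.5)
We view CORRAL as online mirror descent on the simplex with the time-varying log-barrier $\psi_t(\mathbf{q})=-\sum^N_{i=1}\frac{1}{\eta_{t,i}}\ln q_i$. Its update is $\frac{1}{q_{t+1,i}}=\frac{1}{q_{t,i}}+\eta_{t,i}(\tilde{c}_{t,i}-\lambda)$. We play $\mathbf{p}_t=(1-\gamma)\mathbf{q}_t+\gamma\frac{\mathbf{1}_N}{N}$ with $\gamma=1/T$.

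Fix the comparator index $i$. We do not compare with $\mathbf{e}_i$ directly. Instead we take the interior point $\mathbf{u}=(1-\frac1T)\mathbf{e}_i+\frac{1}{T}\frac{\mathbf{1}_N}{N}$, which keeps $\mathcal{B}_{\psi_t}(\mathbf{u},\cdot)$ finite. We then split
\begin{align*}
\mathbb{E}[R_1]=\mathbb{E}\Big[\sum_t\langle\mathbf{p}_t-\mathbf{q}_t,\mathbf{c}_t\rangle\Big]+\mathbb{E}\Big[\sum_t\langle\mathbf{q}_t-\mathbf{u},\tilde{\mathbf{c}}_t\rangle\Big]+\mathbb{E}\Big[\sum_t\langle\mathbf{u}-\mathbf{e}_i,\mathbf{c}_t\rangle\Big].
\end{align*}
This uses $\mathbb{E}_t[\tilde{\mathbf{c}}_t]=\mathbf{c}_t$, which holds for the importance-weighted estimator in Algorithm \ref{alg:IJTCS-FAW2026:CORRAL}. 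The first and third terms are both $O(\gamma)$ times a sum of losses. They produce the additive $\sum_t\frac{2c_{t,I_t}}{TN}$-type term, plus an $O(c)$ term that is absorbed into the $O(\cdot)$.

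For the middle term we follow the OMD template of Appendix B~1, now with $\psi_t$ the log-barrier. The first-order optimality condition (Lemma \ref{lemma:ICML2024:first_order_optimization_condition}) gives
\begin{align*}
\langle\mathbf{q}_t-\mathbf{u},\tilde{\mathbf{c}}_t\rangle\le \mathcal{B}_{\psi_t}(\mathbf{u},\mathbf{q}_t)-\mathcal{B}_{\psi_t}(\mathbf{u},\mathbf{q}_{t+1})+\mathcal{B}_{\psi_t}(\mathbf{q}_t,\tilde{\mathbf{q}}_{t+1}),
\end{align*}
where $\tilde{\mathbf{q}}_{t+1}$ is the unprojected step. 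For the log-barrier, a second-order Taylor bound gives the stability estimate $\mathcal{B}_{\psi_t}(\mathbf{q}_t,\tilde{\mathbf{q}}_{t+1})\le\sum_j\eta_{t,j}q^2_{t,j}\tilde{c}^2_{t,j}$. This is valid provided $\eta_{t,j}q_{t,j}\tilde{c}_{t,j}\le\frac12$.

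Since $q_{t,j}\le p_{t,j}/(1-\gamma)\le 2p_{t,j}$, we get $q_{t,j}\tilde{c}_{t,j}\le 2c$. The learning rates satisfy $\eta_{t,j}\le\beta^{k}\eta$, where $k$ is the number of increases. Each increase at least doubles $\rho_{t,j}$, and $\rho_{t,j}\le 2NT$ because $p\ge\gamma/N$, so $k\le\log_2 T\lesssim\ln T$ for $N\le T$. With $\beta=e^{1/\ln T}$ this gives $\eta_{t,j}\le e\eta$. Hence the condition holds once $\eta\le 1/(4ec)$, and the expected stability term is at most $\sum_t e\eta\,\mathbb{E}[q^2_{t,I_t}c^2_{t,I_t}/p^2_{t,I_t}]\cdot\frac{p_{t,I_t}}{\cdot}\le O(c^2T\eta)$.

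The remaining work is the telescoping of the Bregman terms with increasing $\eta_{t,i}$. Write $\mathcal{B}_{\psi_t}(\mathbf{u},\mathbf{q})=\sum_j\frac{1}{\eta_{t,j}}h(u_j/q_j)$ with $h(x)=x-1-\ln x\ge0$. Summation by parts gives
\begin{align*}
\sum_t[\mathcal{B}_{\psi_t}(\mathbf{u},\mathbf{q}_t)-\mathcal{B}_{\psi_t}(\mathbf{u},\mathbf{q}_{t+1})]\le\mathcal{B}_{\psi_1}(\mathbf{u},\mathbf{q}_1)+\sum_t\sum_j\Big(\frac{1}{\eta_{t+1,j}}-\frac{1}{\eta_{t,j}}\Big)h\Big(\frac{u_j}{q_{t+1,j}}\Big).
\end{align*}
The first term is $\frac{1}{\eta}\sum_j h(Nu_j)=O(\frac{N\ln T}{\eta})$, because $u_j\ge\frac{1}{NT}$. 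Every summand of the second sum is nonpositive, so we keep only coordinate $j=i$ at the last round $t^\star$ where $\eta_{\cdot,i}$ increased. There $\rho_{T,i}=2/p_{t^\star+1,i}$, and
\begin{align*}
\frac{1}{\eta_{t^\star+1,i}}-\frac{1}{\eta_{t^\star,i}}=-\frac{1-\beta^{-1}}{\eta_{t^\star,i}}\le-\frac{1}{e\eta\cdot 2\ln T}.
\end{align*}
Also $h(u_i/q_{t^\star+1,i})\ge \frac{u_i}{q_{t^\star+1,i}}-1-\ln\frac{u_i}{q_{t^\star+1,i}}\gtrsim\frac{1}{4}\rho_{T,i}$ once $\rho_{T,i}$ exceeds a constant. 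Here we use $u_i\ge\frac12$ and $q\le 2p$, and the logarithm is dominated by the linear term. This yields the negative term $-\frac{\rho_{T,i}}{40\eta\ln T}$; if $\eta_{\cdot,i}$ never increased, $\rho_{T,i}=2N$ and that term is absorbed into $O(N\ln T/\eta)$. Collecting the three pieces gives the stated bound.

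I expect this last step to be the main obstacle. It requires tracking the constants in $h$, handling the small-$\rho$ case, and converting between $\mathbf{q}$ and $\mathbf{p}$ through the mixing so that the constant $40$ comes out.
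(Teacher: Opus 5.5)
Your proposal is correct and is essentially the proof of Lemma 13 in \cite{Agarwal2017Corralling}, which the paper cites without reproducing. The ingredients match: log-barrier OMD on $\mathbf{q}_t$ against the interior comparator $(1-\frac{1}{T})\mathbf{e}_i+\frac{1}{TN}\mathbf{1}_N$, an $O(c^2T\eta)$ stability term, mixing terms absorbed as $O(c)$, the negative $\rho_{T,i}$ term taken at the last learning-rate increase, and the bound holding in expectation (the only sensible reading of ``$R_1=\ldots$'').

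Only the constants need repair. Your bound $k<\log_2 T$ gives $\beta^k<e^{1/\ln 2}\approx 4.2$, not $e$. Also, $u_i\ge\frac12$ and $q\le 2p$ only give $u_i/q_{t^\star+1,i}\ge\rho_{T,i}/8$. Instead, write the slope as $(\beta-1)/\eta_{t^\star+1,i}\ge 1/(4.3\,\eta\ln T)$ and use $u_i\ge 1-\frac1T$, $q\le p/(1-\frac1T)$ and $h(y)\ge(1-e^{-1})y-1$. For $T\ge 4$ this gives roughly $-\rho_{T,i}/(24\eta\ln T)$ plus an absorbable $O(1/(\eta\ln T))$, so the constant $40$ holds with room to spare. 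Your side condition $\eta_{t,j}q_{t,j}\tilde{c}_{t,j}\le\frac12$ is unnecessary, since $a-\ln(1+a)\le a^2/2$ for all $a\ge 0$.
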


    Next we just need to analyze $R_2$.
    \begin{proof}
        Let $\max_{t,i}\Vert \nabla_{t,i}\Vert_2=:L$.
        \begin{align*}
            R_2\leq2U\sqrt{\sum^T_{t=1}\left\Vert \tilde{\nabla}_{t,i}\right\Vert^2_2}+U
            =&2U\sqrt{\sum^T_{t=1}\frac{\left\Vert \nabla_{t,i}\right\Vert^2_2}{p^2_{t,i}}\cdot\mathbb{I}_{I_t=i}}+U\\
            \leq&2UL\sqrt{\sum^T_{t=1}\max_{t}\frac{1}{p_{t,i}}\cdot\frac{\mathbb{I}_{I_t=i}}{p_{t,i}}}+U\\
            \leq&2UL\sqrt{\rho_{T,i}\cdot\sum^T_{t=1}\frac{\mathbb{I}_{I_t=i}}{p_{t,i}}}+U.
        \end{align*}
        Then the regret is
        \begin{align*}
            &\mathrm{Reg}({\bf w})\\
            =&R_1+R_2\\
            =& O\left(\frac{N\ln{T}}{\eta}+c^2T\eta\right)+\sum^T_{t=1}\frac{2c_{t,I_t}}{TN}-\frac{\rho_{T,i}}{40\eta \ln{T}}
            +2UL\sqrt{\rho_{T,i}\cdot\sum^T_{t=1}\frac{\mathbb{I}_{I_t=i}}{p_{t,i}}}+U\\
            =& O\left(\frac{N\ln{T}}{\eta}+c^2T\eta\right)+\sum^T_{t=1}\frac{2c_{t,I_t}}{TN}-\frac{\rho_{T,i}}{40\eta \ln{T}}
            +2UL\sqrt{\rho_{T,i}\cdot\sum^T_{t=1}\frac{\mathbb{I}_{I_t=i}}{p_{t,i}}}+U\\
            \leq& O\left(\frac{N\ln{T}}{\eta}+c^2T\eta\right)+\frac{2c}{N}
            +40\eta U^2L^2\cdot \sum^T_{t=1}\frac{\mathbb{I}_{I_t=i}}{p_{t,i}}\cdot \ln{T}+U.
        \end{align*}
        Let $\eta=\frac{\sqrt{N}}{cUL\sqrt{T}}$ .
        Taking expectation w.r.t. $\{I_t\}^T_{t=1}$ yields
        \begin{align*}
            \mathbb{E}\left[\mathrm{Reg}({\bf w})\right]
            =& O\left(\frac{N\ln{T}}{\eta}+c^2T\eta\right)+40\eta U^2L^2T\cdot \ln{T}+U\\
            =& O\left(cUL\sqrt{NT}\ln{T}\right),
        \end{align*}
        which concludes the proof.
    \end{proof}

\end{document}